\documentclass{article} 
\usepackage{algorithm}
\usepackage{algorithmic}
\usepackage{iclr2026_conference,times}
\usepackage{xcolor}

\usepackage{amsmath,amsfonts,bm}

\def\eqref#1{equation~\ref{#1}}

\def\1{\bm{1}}

\def\vmu{{\bm{\mu}}}

\def\vv{{\bm{v}}}
\def\vw{{\bm{w}}}
\def\vx{{\bm{x}}}
\def\vy{{\bm{y}}}
\def\vz{{\bm{z}}}

\def\mA{{\bm{A}}}

\def\mF{{\bm{F}}}

\def\mI{{\bm{I}}}

\DeclareMathAlphabet{\mathsfit}{\encodingdefault}{\sfdefault}{m}{sl}
\SetMathAlphabet{\mathsfit}{bold}{\encodingdefault}{\sfdefault}{bx}{n}

\newcommand{\E}{\mathbb{E}}

\newcommand{\KL}{D_{\mathrm{KL}}}
\newcommand{\Var}{\mathrm{Var}}

\newcommand{\Cov}{\mathrm{Cov}}

\usepackage{natbib}
\usepackage{hyperref}
\usepackage{url}
\usepackage{thmtools}
\usepackage{amsmath, amsthm}
\usepackage{graphicx} 
\usepackage{enumitem} 
\usepackage[nameinlink,noabbrev,capitalise]{cleveref}
\usepackage{makecell} 

\newtheorem{corollary}{Corollary}
\usepackage{thmtools}
\usepackage{diagbox} 
\theoremstyle{definition}

\usepackage{booktabs} 

\theoremstyle{remark}

\newcommand{\diff}{\mathrm{d}}

\usepackage{xspace}
\DeclareRobustCommand{\method}{\texttt{DiME}\xspace}
\DeclareRobustCommand{\methodtilted}{\texttt{DiME-PnPDM}\xspace}
\DeclareRobustCommand{\methodtds}{\texttt{DiME-TDS}\xspace}

\title{Sample-efficient evidence estimation of score based priors for model selection}

\author{Frederic Wang \& Katherine L. Bouman \\
Department of Computing and Mathematical Sciences\\
Caltech\\
Pasadena, CA 91125, USA \\
\texttt{\{fzwang, klbouman\}@caltech.edu} \\
}

\iclrfinalcopy 
\begin{document}

\maketitle

\begin{abstract} 
The choice of prior is central to solving ill-posed imaging inverse problems, making it essential to select one consistent with the measurements $\vy$ to avoid severe bias. 
In Bayesian inverse problems, this could be achieved by evaluating the model evidence $p(\vy \mid M)$ under different models $M$ that specify the prior and then selecting the one with the highest value. 
Diffusion models are the state-of-the-art approach to solving inverse problems with a data-driven prior; however, directly computing the model evidence with respect to a diffusion prior is intractable. Furthermore, most existing model evidence estimators require either many pointwise evaluations of the unnormalized prior density or an accurate clean prior score. We propose \method, an estimator of the model evidence of a diffusion prior by integrating over the time-marginals of posterior sampling methods. Our method leverages the large amount of intermediate samples naturally obtained during the reverse diffusion sampling process to obtain an accurate estimation of the model evidence using only a handful of posterior samples (e.g., 20). 
We also demonstrate how to implement our estimator in tandem with recent diffusion posterior sampling methods. Empirically, our estimator matches the model evidence when it can be computed analytically, and it is able to both select the correct diffusion model prior and diagnose prior misfit under different highly ill-conditioned, non-linear inverse problems, including a real-world black hole imaging problem.
\end{abstract}

\section{Introduction}


In Bayesian inverse imaging problems across science and engineering, the chosen prior distribution $p(\vx)$ plays a pivotal role in shaping the posterior $p(\vx \mid \vy)$ from which candidate solutions are drawn \citep{feng2024event, stuart2010inverse}. In ill-posed settings, the prior acts as a regularizer, steering reconstructions toward solutions consistent with expected image properties. If the ground truth image lies outside the support of the chosen prior, the resulting reconstruction can be severely biased.

A central challenge in Bayesian inference is how to select a prior to use when the true distribution is unknown.
Consequently, the priors used in practice are often chosen somewhat arbitrarily. If the likelihood of a prior could be easily evaluated, it would provide a principled basis for prior selection. Specifically, given a set of potential models $\{ M_i \}$ that specify the prior $p(\vx \mid M_i)$, we would like to find the most likely candidate model by computing $p(M_i \mid \vy) \propto p(\vy \mid M_i) p(M_i)$. It is commonly assumed that  $p(M_i)$ is uniform, so it suffices to compute the \textit{(log) model evidence}, $\log p(\vy \mid M_i)$.  Model evidence is not only useful for selecting the best prior for reconstruction, but also for quantifying epistemic uncertainty and assessing the assumptions underlying a model or physical theory \citep{trotta2007applications}.

Although the model evidence provides a principled way to select the best prior for reconstruction, computing it generally requires evaluating an intractable integral over the full prior: $\log p(\vy \mid M_i) = \log \int p(\vy \mid \vx, M_i) p(\vx \mid M_i) d\vx$. This motivates the development of estimation methods, such as sequential Monte Carlo \citep{del2006sequential, chen2024sequential}, nested sampling \citep{skilling2006nested, cai2022proximal}, annealed importance sampling \cite{neal2001annealed, doucet2022score}, and harmonic mean estimators \citep{newton1994approximate, mcewen2021machine, polanska2023learned, spurio2023bayesian}. However, these methods rely on sampling using the clean prior score $\nabla_\vx \log p(\vx)$ or the unnormalized density $\log p(\vx)$, which may be inaccurate for out-of-distribution 
$\vx$, expensive to evaluate, and can lead to slow mixing due to the ill-conditioned data geometry.




Bayesian inverse imaging solvers achieve state-of-the-art performance with diffusion-based priors \citep{song2020score, ho2020denoising, song2021solving, jalal2021robust, mardani2023variational}, yet an accurate model evidence
remains intractable using existing methods.
Diffusion models generate unconditional samples by denoising a sequence of intermediate noised images, each corresponding to a time-marginal distribution between pure noise and the clean image.
 Diffusion-based posterior sampling methods adhere to posterior time-marginals instead
\citep{chung2022diffusion, song2023pseudoinverse,coeurdoux2024plug, li2024decoupled, zhu2023denoising, wu2024principled, zhang2025improving}. 
While efficient diffusion density estimators have been proposed \citep{skreta2024superposition}, density-based estimation methods are still typically high-variance for high dimensional problems and require thousands of posterior samples, which is computationally challenging with diffusion models. Furthermore, as diffusion models learn the score of intermediate noised priors, their estimates of the clean prior score are usually less accurate and/or highly ill-conditioned, making it difficult for traditional methods to explore the full distribution, while using their score from a higher noise levels can add bias. As a result, existing estimators are not suitable for or give biased estimates for diffusion-based priors. 

 \textbf{Our contribution:} We propose the Diffusion Model Evidence (\method) estimator, a method for estimating the model evidence under a diffusion model prior that does not require the prior score or density. 
We derive the estimator for posterior sampling methods that anneal along the \textit{standard (posterior) marginals} $p(\vx_t \mid \vy)$, as well as a generalized estimator for any arbitrary path of marginals 
that anneal to the true posterior.
By integrating along these posterior time-marginals, \method accurately estimates the model evidence using intermediate samples already generated during posterior sampling. \method thus only requires a small number of posterior samples (e.g., 20) for accurate estimation.
We also describe how to practically implement \method alongside a state-of-the-art posterior sampling method Decoupled Annealing Posterior Sampling (DAPS).

 We perform several experiments validating our method. We first test in a mixture of Gaussian setting where an analytic evidence can be derived, and find that \method provides nearly unbiased estimates of the evidence and performs comparably to baselines sequential Monte Carlo, annealed importance sampling, and thermodynamic integration despite not using the prior score. 
 We also test \method on two non-convex inverse problems, Gaussian and Fourier phase retrieval. \method consistently selects the correct prior from a set of 10 diffusion models trained on MNIST \citep{deng2012mnist} digits given a single noisy measurement of one MNIST digit, while the baselines fail to do so.

Finally, we use our method to perform both model selection and validation on real M87* black hole observations from the Event Horizon Telescope \citep{event2019first1, event2019first4}. \method indicates that a prior derived from synthetic black hole images generated using General-Relativistic Magnetohydrodynamics (GRMHD) \citep{mizuno2022grmhd} has higher likelihood compared to the Radiatively Inefficient Accretion Flow (RIAF) \citep{broderick2011evidence} black hole prior, a prior trained on general space images \citep{alamimam2024flare}, a prior trained on faces \citep{liu2015faceattributes}, and a prior trained on MNIST digit 0s. Furthermore, we diagnose the validity of the GRMHD prior by prior predictive checking on the model evidence. Our results suggest that the M87* observations are statistically in-distribution with respect to the GRMHD prior, while still leaving room in the model for refinement.

\vspace{-0.05in}
\section{Background}
\vspace{-0.05in}
\subsection{Diffusion Models}
Given a clean image $\vx_0 \sim p_0(\vx_0)$, the diffusion forward process at time $t \in [0, T]$ is given by:
\begin{align}
    \vx_t = a_t \vx_0 + \sigma_t \vz_t, \quad\vz_t \sim \mathcal{N}(0, \mI)
\end{align}
where $a_t$ is the signal scaling, $\sigma_t$ is noise scaling, and $a_t'$, $\sigma_t'$ are the time derivatives. We can sample from the clean image distribution by discretizing the reverse SDE:
\begin{align}
    d\vx_t
&= \Bigl[\frac{a_t'}{a_t}\,\vx_t \;-\; g_t^{\,2}\,\nabla_{\vx_t}\log p_t(\vx_t)\Bigr] dt
 \;+\; g_t\, d\vw_t 
\end{align}

where $g_t^2 = 2\left(\sigma_t \sigma_t' - \frac{a_t'}{a_t}\sigma_t^{2}\right)$, $\vw_t$ is Brownian motion and

$\nabla_{\vx_t} \log p_t(\vx_t)$ is learned. We can obtain the posterior mean via Tweedie's formula \citep{efron2011tweedie}: 
\begin{align}
    \mathbb{E}[\vx_0 \mid \vx_t]
= \frac{1}{\alpha_t}\Bigl(\vx_t + \sigma_t^2 \, \nabla_{\vx_t} \log p(\vx_t)\Bigr).
\end{align}


\subsection{Diffusion-based Posterior Sampling Methods}

Most diffusion posterior sampling methods fall broadly under two categories: (1) explicitly approximating the intractable likelihood score $\nabla_{\vx_t} \log p_t(\vy \mid \vx_t)$ as guidance during reverse diffusion, and (2)  methods that anneal along a path of specified intermediate marginals, using techniques such as sequential Monte Carlo \citep{cardoso2023monte, wu2023practical}, gradient-based optimization \citep{moufad2024variational}, or Langevin dynamics \citep{zhang2025improving, wu2024principled, zhu2023denoising}.
The approximation errors of (1) compound over the sampling process, resulting in intermediate time-marginals with no tractable form and a biased posterior,
while (2) corrects these errors by adhering to a specified marginal path, allowing for their time-marginals to anneal to the true posterior.
As a result, in this work we focus on two different sampling methods of (2) that have demonstrated strong performance for solving physical inverse problems \citep{zheng2025inversebench}.

Concretely, given measurements $\vy$ from an inverse problem $\vy = \mA (\vx) + \varepsilon$ we introduce two likelihood terms. The first, $p(\vy \mid \vx_t) = \int p(\vy \mid \vx_0) p(\vx_0 \mid \vx_t) d\vx_0$ is the marginal likelihood of $\vy$ conditioned on the current diffusion state $\vx_t$. The second, $q(\vy \mid \vx_t) := p(\vy \mid \vx_0 = \vx_t)$ is a time-independent data likelihood term that assumes $\vx_t$ is the clean image.

The \textit{Decoupled Annealing Posterior Sampling} (DAPS) method \citep{zhang2025improving} alternates between noising and sampling from the posterior. Given initial noise samples $\vx_T \sim \mathcal{N}(0, \mI)$ and $N$-step annealing schedule $[t_1, \dots, t_N]$, we sample from the clean image posterior using Langevin dynamics before re-adding noise according to the diffusion process:
\begin{align}
    \tilde{\vx}_0 &\sim p(\vx_0 \mid \vx_{t_k}, \vy) \propto p(\vy \mid \vx_0) p(\vx_0 \mid \vx_{t_k}) \label{eq:daps-sample} \\
    \vx_{t_{k-1}} &\sim p(\vx_{t_{k-1}} \mid \tilde{\vx}_0) \label{eq:daps-noise}
\end{align}

\cite{zhang2025improving} proposes two approaches: \textit{Gaussian approximation DAPS}, where $p(\vx_0 \mid \vx_t)$ is approximated via a Gaussian $\mathcal{N}(\mathbb{E}[\vx_0 \mid \vx_t], \bm{\Sigma}_{\vx_0 \mid \vx_t})$ and the covariance $\bm{\Sigma}_{\vx_0 \mid \vx_t}$ is heuristically chosen as $\sigma_t^2$, and \textit{exact DAPS}, where the prior score is computed using the diffusion model at a tiny noise level $\nabla \log p(\vx_0) \approx \nabla \log p(\vx, \sigma_{min})$.
The unconditional distribution of intermediate samples closely match the \textit{standard marginals}: $\vx_t \sim p(\vx_t \mid \vy) \propto p(\vy \mid \vx_t) p(\vx_t) $ for all times $t$.

The \textit{Plug-and-Play Diffusion Models} (PnP-DM) sampling method \citep{wu2024principled} takes inspiration from the Split Gibbs Sampler \citep{vono2019split} and related diffusion-based samplers \citep{coeurdoux2024plug} by splitting each annealing iteration into a likelihood step and a prior step. In particular, given $N$-step annealing schedule $[t_1, \dots, t_N]$ and clean image initialization $\hat{\vx}_0$, we alternate between running Langevin dynamics to obtain a sample from the next time-marginal posterior and using the diffusion model to sample a clean image. We rewrite the updates for one timestep $t_k$ in terms of standard diffusion notation:

\begin{align}
    \vx_{t_{k}}  &\sim q(\vx_{t_k} \mid \vy,  \hat{\vx}_0) \propto q(\vy \mid \vx_{t_{k}}) p(\vx_{t_k} \mid \hat{\vx}_0) \\ 
    \hat{\vx}_0 &\sim p(\vx_0\mid\vx_{t_k}) \label{eq:pnpdm-clean-estimate}
\end{align}
where $\vx_{t_k}$ is sampled from a joint distribution of the noised $\hat{\vx}_0$ and the time-independent likelihood $q$. We write $q(\vy \mid \vx_{t_k})$ to emphasize this proxy likelihood applies the measurement model directly to a noisy rather than clean image.

We refer to the resulting unconditional distributions of intermediate samples as \textit{PnP-DM marginals}, denoted by $\vx_t \sim q(\vx_t \mid \vy) \propto  q(\vy \mid \vx_t) p(\vx_t)$ for all $t$.

\subsection{Existing model evidence estimators}

The simplest estimator approximates the evidence by averaging $p(\vy \mid \vx)$ over samples $\vx \sim p(\vx)$, though this requires exponentially many samples in high dimensions.
The standard harmonic mean estimator \citep{newton1994approximate} mitigates this issue but has infinite variance. Learned harmonic mean estimators \citep{mcewen2021machine, polanska2023learned, spurio2023bayesian} stabilize the variance by importance sampling with a surrogate neural density, under the assumption that the prior density is known. However, surrogate training requires thousands of posterior samples and importance sampling requires a similar amount of density evaluations even in medium-dimensional settings, making these approaches infeasible for diffusion-based priors.

Nested sampling \citep{skilling2006nested, cai2022proximal} re-parametrizes the model evidence into a one-dimensional integral over the prior volume, but still requires the score or density and is limited to log-convex likelihoods. Thermodynamic integration (TI) \citep{lartillot2006computing}, annealed importance sampling (AIS) \citep{neal2001annealed} and sequential Monte Carlo (SMC) \citep{del2006sequential} compute the evidence  by integrating over a path of distributions $p_\beta$, with $0 \leq \beta \leq 1$. For diffusion priors, accurately sampling from arbitrary $p_\beta(\vx)$ is computationally difficult as we lack a clean prior score. Reverse diffusion Monte Carlo (RDMC) \citep{huang2023reverse} methods simulate the diffusion process to sample from unnormalized densities, using a Monte Carlo approach to approximate the noised scores. \citet{guo2025complexity} independently derived an evidence estimator for the reverse diffusion process
under very different assumptions: they rely on a closed-form posterior score, obtain exact posterior marginals directly from an analytic diffusion that is not learned, and provide only theoretical convergence results without experimental validation.

\section{Estimating the model evidence along the standard marginals}\label{sec:daps}


We introduce \method to estimate model evidence from the standard marginals $p(\vx_t \mid \vy)$, aligning closely with the annealing path of DAPS. All proofs in this section are shown in Appendix \ref{sec:proof}. A general  model evidence estimator for arbitrary marginals annealing to the true posterior is derived in Appendix \ref{sec:generalizing}, and an unnormalized model evidence estimator that instead uses the PnP-DM marginals (\methodtilted) is derived and implemented in Appendix \ref{sec:pnpdm}. 

\begin{restatable}[\method: Model evidence estimation along the standard marginals]{proposition}{standardmarginals}\label{prop:daps-estimator}  
    Given diffusion process $\vx_t = a_t \vx_0 + \sigma_t \vz_t, \vz_t \sim \mathcal{N}(0, \mI)$, posterior marginals $p(\vx_t \mid \vy) \propto p(\vx_t) p(\vy \mid \vx_t)$, and timesteps $0=t_0<\dots<t_N=T$, the model evidence can be estimated via:
\begin{align}
\log p(\vy) &= \mathbb{E}_{\vx_0 \sim p(\vx_0 \mid \vy)}\!\big[\,\log p(\vy \mid  \vx_0)\,\big] - \KL(p(\vx_0 \mid \vy) || p(\vx_0)).  \label{eq:std-evidence}
\end{align}
The log-likelihood term can be estimated with posterior samples \footnote{For likelihoods with Gaussian noise, if the posterior and prior have enough overlap such that posterior samples have a data fit of $\chi^2 \approx 1$, this term essentially becomes constant under the same likelihood function.}, and the KL divergence with:
\begin{align}
\KL(p(\vx_0 \mid \vy) || p(\vx_0)) &\approx 
\sum_{i=1}^N  c_{t_i} \Delta t_i  \mathbb{E}_{\vx_{t_i} \sim p(\vx_{t_i} \mid \vy)}
 \|\nabla_{\vx_{t_i}} \log p(\vy \mid \vx_{t_i})\|^2  \label{eq:std-evidence-final}
\end{align}
where $c_{t_i} = \sigma_{t_i}' \sigma_{t_i} - \sigma_{t_i}^2 \frac{a_{t_i}'}{a_{t_i}}$  and $\Delta t_i = t_i - t_{i-1}$ are derived from the diffusion schedule.
\end{restatable}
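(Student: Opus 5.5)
The plan is to prove the two displayed claims separately. The first is an exact identity. The second comes from differentiating a KL divergence along the forward diffusion, followed by a Riemann-sum discretization.

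\textbf{Step 1 (exact decomposition).} By Bayes' rule, for every $\vx_0$ we have $\log p(\vy) = \log p(\vy \mid \vx_0) + \log p(\vx_0) - \log p(\vx_0 \mid \vy)$. The left-hand side does not depend on $\vx_0$. Taking the expectation over $\vx_0 \sim p(\vx_0 \mid \vy)$ therefore gives $\mathbb{E}[\log p(\vy\mid\vx_0)] - \KL(p(\vx_0\mid\vy)\,\|\,p(\vx_0))$ directly. This is \eqref{eq:std-evidence}, and it requires only integrability of the log-likelihood under the posterior.

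\textbf{Step 2 (both marginals solve the same forward dynamics).} Since $\vy$ depends on $\vx_t$ only through $\vx_0$, we have $\vy \perp \vx_t \mid \vx_0$. Hence
\[
p(\vx_t\mid\vy)=\int p(\vx_t\mid\vx_0)\,p(\vx_0\mid\vy)\,d\vx_0 = \frac{p(\vy\mid\vx_t)\,p(\vx_t)}{p(\vy)},
\]
so the standard posterior marginals are exactly the forward-diffusion marginals started from the posterior. Consequently $p_t:=p(\vx_t)$ and $q_t:=p(\vx_t\mid\vy)$ satisfy the same Fokker--Planck equation associated with the forward SDE $d\vx_t = \frac{a_t'}{a_t}\vx_t\,dt + g_t\,d\vw_t$, with $g_t^2 = 2(\sigma_t\sigma_t' - \frac{a_t'}{a_t}\sigma_t^2)$ as in the background. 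Note that $g_t^2/2 = c_t$.

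\textbf{Step 3 (de Bruijn-type identity).} I would differentiate $\KL(q_t\|p_t)=\int q_t\log(q_t/p_t)$ in $t$ and substitute the Fokker--Planck equations. The drift terms cancel, because the drift is common to both densities. Integrating by parts in the diffusion term then gives
\[
\frac{d}{dt}\KL(q_t\|p_t) = -\frac{g_t^2}{2}\,\mathbb{E}_{q_t}\big\|\nabla\log q_t - \nabla\log p_t\big\|^2 .
\]
By Step 2, $\nabla\log q_t - \nabla\log p_t = \nabla_{\vx_t}\log p(\vy\mid\vx_t)$. Integrating from $0$ to $T$ yields
\[
\KL(q_0\|p_0) = \KL(q_T\|p_T) + \int_0^T c_t\,\mathbb{E}_{q_t}\|\nabla_{\vx_t}\log p(\vy\mid\vx_t)\|^2\,dt .
\]

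\textbf{Step 4 (boundary term and discretization).} At $t=T$ the signal $a_T\vx_0$ is negligible relative to $\sigma_T$. Both $q_T$ and $p_T$ are then essentially $\mathcal{N}(0,\sigma_T^2\mI)$, so $\KL(q_T\|p_T)\approx 0$; more precisely, by data processing it is bounded by a quantity vanishing as $a_T/\sigma_T\to 0$. Approximating the time integral by a right-endpoint Riemann sum on the grid $t_0<\dots<t_N$ gives \eqref{eq:std-evidence-final}.

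\textbf{Main obstacle.} I expect Step 3 to be the delicate part. Exchanging $d/dt$ with the integral and discarding the boundary terms in the integrations by parts requires decay and regularity of $q_t,p_t$ and their scores. For $t>0$ this holds because of the Gaussian smoothing. Near $t=0$, where the prior may be singular, I would first integrate over $[\epsilon,T]$ and then let $\epsilon\to0$, using lower semicontinuity and the monotonicity of KL in $t$. Since the statement is phrased as an approximation (``$\approx$''), the remaining errors, namely the boundary term at $T$ and the quadrature error, only need to be identified, not bounded sharply.
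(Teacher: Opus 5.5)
Your proposal is correct and follows the paper's proof step for step: the Bayes/KL identity, the time-derivative of $\KL(p(\vx_t\mid\vy)\,\|\,p(\vx_t))$, the substitution of $\nabla_{\vx_t}\log p(\vy\mid\vx_t)$ for the score difference, integration over $[0,T]$, and a right-endpoint Riemann sum. Your Fokker--Planck (de Bruijn) computation is the same calculation as the paper's Lemma~\ref{lemma:kl-derivative} combined with the conditional PF-ODE, because $\vv_p(\vx_t)=\frac{a_t'}{a_t}\vx_t-c_t\nabla_{\vx_t}\log p(\vx_t)$ turns the continuity equation into exactly that Fokker--Planck equation. If anything you are more careful than the paper: it asserts $p(\vx_T\mid\vy)=p(\vx_T)$ exactly and does not address regularity as $t\to0$, whereas you count $\KL(p(\vx_T\mid\vy)\,\|\,p(\vx_T))$ as part of the approximation error and make explicit (your Step 2) that the standard marginals are the forward diffusion of the posterior, a fact the paper uses only implicitly.
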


 The sum across each sample-path $\{ \vx_{t_i}\}_{i=0,\dots,N}$ can be viewed as the distance of the resulting posterior sample $\vx_0$ to the prior.
 In the remainder of the section, we showcase how to practically implement \method in tandem with the DAPS method but with two modifications: in Section \ref{sec:covariance} we design an improved covariance approximation\footnote{While approximating $p(\vx_0 \mid \vx_t)$ is necessary for the standard marginals annealing path, it is not necessary for many annealing paths under the generalized framework introduced in Appendix \ref{sec:generalizing}.} for $p(\vx_0 \mid \vx_t)$ if the Gaussian approximation DAPS is used, and in Section \ref{sec:score-estimation} we propose to sample two $\tilde{\vx}_0 \sim p(\vx_0 \mid \vx_t, \vy)$ for each $\vx_t$ to estimate $\|\nabla_{\vx_t} \log p(\vy \mid \vx_t)\|^2$ needed for Eq.~\ref{eq:std-evidence-final}. The full algorithm can be seen in Algorithm \ref{alg:daps}.

\subsection{An improved approximation for the posterior covariance}\label{sec:covariance}

While the exact DAPS sampler can be used for accurate sampling of $\tilde{\vx}_0 \sim p(\vx_0 \mid \vx_t, \vy)\propto p(\vy \mid \vx_0)p(\vx_0 \mid \vx_t)$  (Eq. \ref{eq:daps-sample}), Gaussian approximation DAPS provides a significantly cheaper alternative by approximating $p(\vx_0 \mid \vx_t) \approx \mathcal{N}(\mathbb{E}[\vx_0 \mid \vx_t], \sigma_t^2)$. However, this covariance heuristic only accounts for  $p(\vx_t \mid \vx_0)$ and ignores $p(\vx_0)$. 
While this heuristic is accurate at low noise levels, it overestimates the variance in the high noise regimes: at the terminal noise level $t = T$, we expect 
$p(\vx_0 \mid \vx_T) \approx p(\vx_0)$ with a covariance that should match the prior $\Cov(\vx_0)$, but the heuristic gives 
$\Cov(\vx_0 \mid \vx_T) \approx \sigma_T^2 \gg \Cov(\vx_0)$. This mismatch at high noise was also observed empirically in \citet{zhang2025improving}.
Since \method uses all time-marginals it is important to accurately estimate $p(\vx_0 \mid \vx_t)$ for all $t$. Therefore, we include knowledge of $p(\vx_0)$ by approximating it as 
 Gaussian 
$\mathcal{N}(\bm{\mu}_0, \bm{\Sigma}_0)$, 
using an empirical covariance computed from the training data similar to the covariance approximation introduced in \cite{linhart2024diffusion}.
Introducing an approximate prior provides a more accurate representation of $p(\vx_0 \mid \vx_t)$, which remains a Gaussian approximation as it is now the product of two Gaussians. The resulting covariance is given by:

\begin{restatable}
{lemma}{covariancelemma}\label{lemma:covariance}

Suppose we have diffusion process $\vx_t = a_t \vx_0 + \sigma_t \vz_t, \vz_t \sim \mathcal{N}(0, \mI)$. Let $\vx_t \sim p(\vx_t)$ and suppose $p(\vx_0) \approx \mathcal{N}(\bm{\mu}_0, \bm{\Sigma}_0)$. Then we can express the posterior covariance $\bm{\Sigma}_{\vx_0 \mid \vx_t}$ as 
\begin{align}
    \bm{\Sigma}_{\vx_0 \mid \vx_t} = \left[\bm{\Sigma}_0^{-1} + \frac{a_t^2}{\sigma_t^2} \mI\right]^{-1}.
\end{align}
    
\end{restatable}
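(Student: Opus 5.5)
The plan is to compute $p(\vx_0 \mid \vx_t)$ directly from Bayes' rule, $p(\vx_0 \mid \vx_t) \propto p(\vx_t \mid \vx_0)\, p(\vx_0)$, and to observe that under the Gaussian approximation $p(\vx_0) \approx \mathcal{N}(\bm{\mu}_0, \bm{\Sigma}_0)$ this is a product of two Gaussian densities in $\vx_0$. The forward process $\vx_t = a_t \vx_0 + \sigma_t \vz_t$ gives $p(\vx_t \mid \vx_0) = \mathcal{N}(\vx_t; a_t \vx_0, \sigma_t^2 \mI)$. As a function of $\vx_0$, its log is the quadratic $-\frac{1}{2\sigma_t^2}\|\vx_t - a_t\vx_0\|^2$. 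The log-prior contributes $-\frac12 (\vx_0 - \bm{\mu}_0)^\top \bm{\Sigma}_0^{-1}(\vx_0 - \bm{\mu}_0)$.

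The key step is to collect the terms quadratic in $\vx_0$ in the sum of the two log-densities. From the likelihood term we get $-\frac12 \vx_0^\top \frac{a_t^2}{\sigma_t^2}\mI\, \vx_0$, and from the prior $-\frac12 \vx_0^\top \bm{\Sigma}_0^{-1} \vx_0$. The remaining terms are linear in $\vx_0$ or constant in $\vx_0$ (they depend only on $\vx_t$ and $\bm{\mu}_0$).

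By completing the square, the log-posterior equals $-\frac12 (\vx_0 - \vmu_{0\mid t})^\top \bm{\Lambda} (\vx_0 - \vmu_{0\mid t}) + \text{const}$ with precision $\bm{\Lambda} = \bm{\Sigma}_0^{-1} + \frac{a_t^2}{\sigma_t^2}\mI$. This shows $p(\vx_0\mid\vx_t)$ is Gaussian with covariance $\bm{\Lambda}^{-1}$, which is the claimed expression. As a byproduct the mean is $\bm{\Lambda}^{-1}\big(\bm{\Sigma}_0^{-1}\bm{\mu}_0 + \frac{a_t}{\sigma_t^2}\vx_t\big)$, though the lemma does not need it.

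There is no serious obstacle; the only points needing care are the following.
\begin{enumerate}
\item The inverse must exist. $\bm{\Sigma}_0$ is symmetric positive definite, which is assumed implicitly by writing $\bm{\Sigma}_0^{-1}$. Adding the positive semidefinite term $\frac{a_t^2}{\sigma_t^2}\mI$ keeps $\bm{\Lambda}$ positive definite.
\item The conditioning on the realized $\vx_t$ only affects the mean, so the covariance is independent of $\vx_t$.
\end{enumerate}

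Finally, I would sanity-check the two limits discussed in the text. As $\sigma_t/a_t \to 0$, the covariance behaves like $\frac{\sigma_t^2}{a_t^2}\mI$, recovering the DAPS heuristic when $a_t=1$. As $\sigma_t \to \infty$ (i.e.\ $t=T$), it tends to $\bm{\Sigma}_0$, matching the requirement $\Cov(\vx_0\mid\vx_T)\approx\Cov(\vx_0)$.
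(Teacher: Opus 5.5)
Your argument is correct, but it takes a different route from the paper. The paper works in covariance form. It computes the joint second moments $\Cov(\vx_t) = a_t^2\bm{\Sigma}_0 + \sigma_t^2\mI$ and $\Cov(\vx_0,\vx_t) = a_t\bm{\Sigma}_0$. It then applies the Gaussian conditioning (Schur complement) formula to get $\bm{\Sigma}_0 - a_t\bm{\Sigma}_0[a_t^2\bm{\Sigma}_0 + \sigma_t^2\mI]^{-1}a_t\bm{\Sigma}_0$, and finally uses the Woodbury identity to reach the stated precision form.

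You instead work in information form. Multiplying the Gaussian likelihood $p(\vx_t\mid\vx_0)$ by the Gaussian prior and reading off the quadratic coefficient gives the precision $\bm{\Sigma}_0^{-1} + \frac{a_t^2}{\sigma_t^2}\mI$ directly. This buys you three things: no matrix inversion lemma is needed, you get the posterior mean for free, and $a_t = 0$ causes no trouble, whereas the paper's intermediate step factors out $a_t^2$.

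The paper's route has its own advantage: its Schur-complement expression is well defined even when $\bm{\Sigma}_0$ is singular, needing only $\sigma_t>0$. This matters in practice, since the paper notes the empirical MNIST covariance is nearly degenerate and needs jitter. Your derivation, like the final formula, requires $\bm{\Sigma}_0$ to be invertible so that $p(\vx_0)$ has a density.

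One small correction to your sanity check: the relevant high-noise limit is $a_t/\sigma_t \to 0$, not $\sigma_t\to\infty$. The former also covers variance-preserving schedules, where $\sigma_T \to 1$ and $a_T \to 0$.
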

For large images where computing a full covariance matrix is intractable, a diagonal approximation in a transform basis has also been demonstrated to work well \citep{peng2024improving}, and for linear inverse problems, an analytic local covariance is computable \citep{boys2023tweedie}. Our approximation aligns the annealing path more closely with $p(\vx_t \mid \vy)$ for all $t$.
It is worth noting that DAPS re-samples at every annealing step, ensuring that these Gaussian approximation errors are never compounded. As shown in Sections \ref{sec:mog-gaussian} and \ref{sec:m87-model-selection}, using the proposed improved approximation leads to nearly unbiased evidence estimates for both a multimodal Gaussian toy example with varying local curvature as well as real-world scientific priors.

\subsection{Estimating the squared likelihood score} \label{sec:score-estimation}

\begin{figure}
    \centering
    \includegraphics[width=0.9\linewidth]{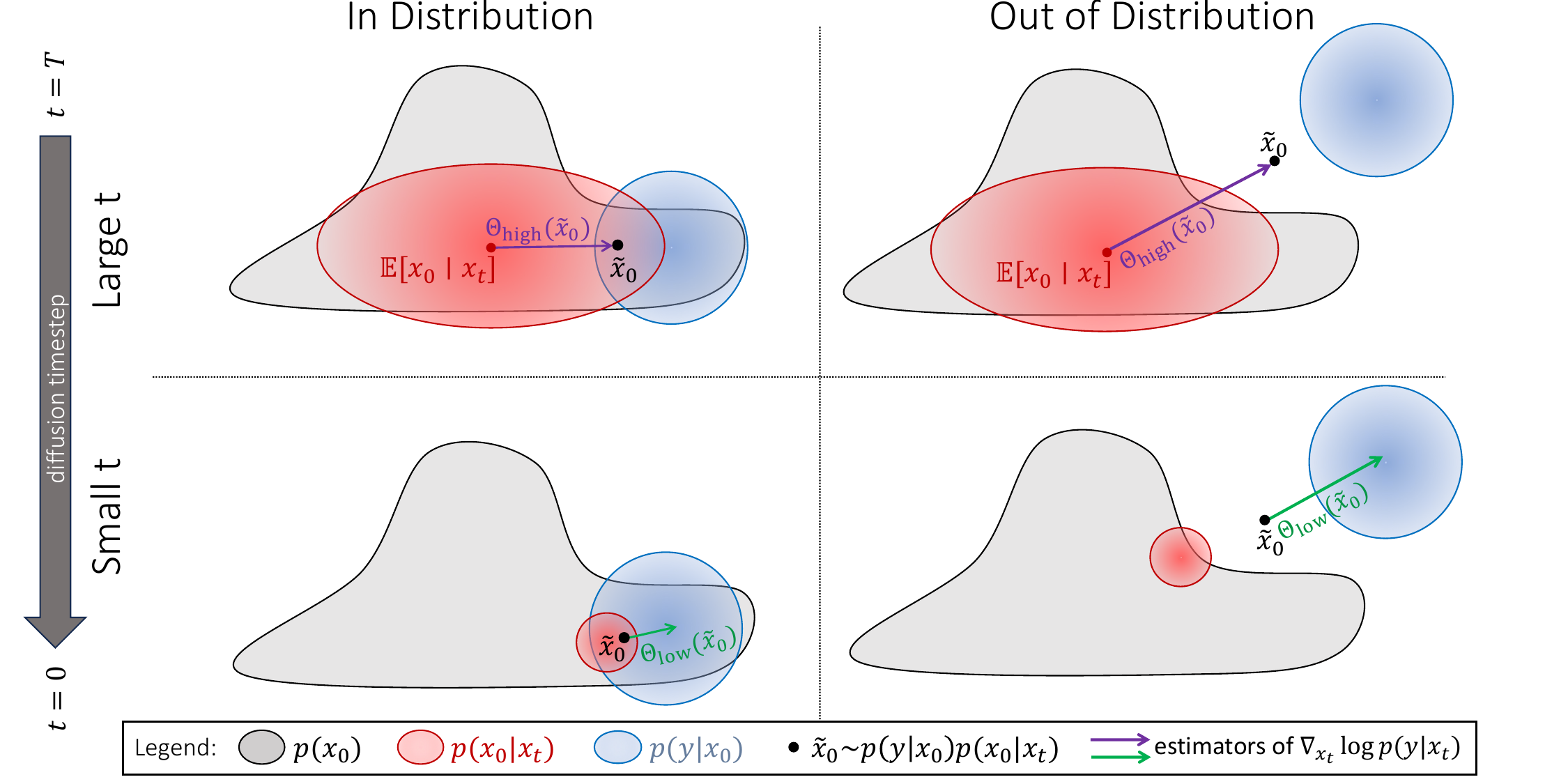}
    \caption{Visualization of the unbiased estimators $\Theta_{high}, \Theta_{low}$  of the likelihood score $\nabla_{\vx_t} \log p(\vy \mid \vx_t)$ described in Lemma \ref{lemma:score-rep-high-noise} for both in-distribution (left) and out-of-distribution (right) measurements $\vy$. At large diffusion time steps, the high noise estimator (top) uses the distance between $\tilde{\vx}_0$ and $\mathbb{E}[\vx_0 \mid \vx_t]$. At small diffusion time steps, the low noise estimator (bottom) uses the likelihood score at $\tilde{\vx}_0$. The estimator is larger (longer arrows) when $\vy$ is out-of-distribution, as there greater KL divergence from the posterior to the prior; this results in \method computing a lower evidence. Note the scaling factors $\frac{a_t}{\sigma_t^2}$ and $\bm{\Sigma}_{\vx_0 \mid \vx_t}$ of the estimator are not shown.
   }
    \label{fig:visualization}
\end{figure}

While directly computing $\nabla_{\vx_t} \log p(\vy \mid \vx_t)$ is intractable, \method only requires an unbiased estimate as seen in Eq. \ref{eq:std-evidence-final}. Estimators using unconditional samples of $\vx_0 \mid \vx_t$ are high-variance: these samples usually have low likelihood $p(\vy \mid \vx_0)$ resulting in volatile scores. Thankfully, DAPS provides samples of $\tilde{\vx}_0 \sim p(\vx_0 \mid \vx_t, \vy)$, which we can use to create estimators with lower variance:

\begin{restatable}
{lemma}{likelihoodscoreestimator}\label{lemma:score-rep-high-noise}
Suppose we have diffusion process $\vx_t = a_t \vx_0 + \sigma_t \vz_t, \vz_t \sim \mathcal{N}(0, \mI)$ and sample $\tilde{\vx}_0 \sim p(\vx_0 \mid \vx_t, \vy)$.
Under the assumption that $p(\vx_0 \mid \vx_t) \approx \mathcal{N}(\mathbb{E}[\vx_0 \mid \vx_t], \bm{\Sigma}_{\vx_0 \mid \vx_t})$,
both $\Theta_{high}(\tilde{\vx}_0)$ and $\Theta_{low}(\tilde{\vx}_0)$ are unbiased estimators of $\nabla_{\vx_t} \log p(\vy \mid \vx_t)$ with
\begin{align}
    \Theta_{high}(\tilde{\vx}_0) &= \frac{a_t}{\sigma_t^2} \left( \tilde{\vx}_0 - \mathbb{E}[\vx_0 \mid \vx_t] \right). \\
    \Theta_{low}(\tilde{\vx}_0) &=\frac{a_t}{\sigma_t^2} \left( \bm{\Sigma}_{\vx_0 \mid \vx_t} \nabla_{\tilde{\vx}_0} \log p(\vy \mid \tilde{\vx}_0) \right).
\end{align}
Under low diffusion noise $(a_t \to 1, \sigma_t \to 0)$, $\Var(\Theta_{low}) \to 0$ but $\Var(\Theta_{high}) = O(\frac{1}{\sigma_t^2})$. \newline
Under high noise $(a_t \to 0, \sigma_t \to 1)$, $\Var(\Theta_{high}) \to 0$ but $\Var(\Theta_{low}) = O(\frac{\sigma_t^4}{\sigma_\vy^4} \Var(\Theta_{high}))$.
\end{restatable}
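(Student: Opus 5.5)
The plan is to start from the identity $p(\vy \mid \vx_t) = \int p(\vy \mid \vx_0)\, p(\vx_0 \mid \vx_t)\, d\vx_0$ and differentiate under the integral, so that
\[
\nabla_{\vx_t} \log p(\vy \mid \vx_t) = \mathbb{E}_{\vx_0 \sim p(\vx_0 \mid \vx_t, \vy)}\big[\nabla_{\vx_t} \log p(\vx_0 \mid \vx_t)\big].
\]
This holds because $p(\vy\mid\vx_0)p(\vx_0\mid\vx_t)/p(\vy\mid\vx_t) = p(\vx_0\mid\vx_t,\vy)$, which uses the Markov structure $\vy \perp \vx_t \mid \vx_0$. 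The remaining work is to write the conditional score $\nabla_{\vx_t}\log p(\vx_0\mid\vx_t)$ in two ways, giving $\Theta_{high}$ and $\Theta_{low}$.

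\textbf{High-noise estimator.} By Bayes, $\log p(\vx_0 \mid \vx_t) = \log p(\vx_t \mid \vx_0) + \log p(\vx_0) - \log p(\vx_t)$. Since $p(\vx_t\mid\vx_0) = \mathcal{N}(a_t\vx_0, \sigma_t^2\mI)$, differentiating gives
\[
\nabla_{\vx_t}\log p(\vx_0\mid\vx_t) = -\frac{\vx_t - a_t\vx_0}{\sigma_t^2} - \nabla_{\vx_t}\log p(\vx_t).
\]
Tweedie's formula rewrites $\nabla_{\vx_t}\log p(\vx_t) = (a_t\mathbb{E}[\vx_0\mid\vx_t] - \vx_t)/\sigma_t^2$. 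Substituting, this term equals $\frac{a_t}{\sigma_t^2}(\vx_0 - \mathbb{E}[\vx_0\mid\vx_t])$, and taking the expectation over $p(\vx_0\mid\vx_t,\vy)$ shows that $\Theta_{high}$ is unbiased. This step does not need the Gaussian assumption.

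\textbf{Low-noise estimator.} Under the Gaussian assumption, $p(\vy\mid\vx_t)$ is the convolution of $p(\vy\mid\cdot)$ with $\mathcal{N}(\vmu_t, \bm{\Sigma})$, where $\vmu_t = \mathbb{E}[\vx_0\mid\vx_t]$ and $\bm{\Sigma} = \bm{\Sigma}_{\vx_0\mid\vx_t}$. I would treat $\bm{\Sigma}$ as locally independent of $\vx_t$, which is consistent with Lemma~\ref{lemma:covariance}. Then the chain rule gives
\[
\nabla_{\vx_t}\log p(\vy\mid\vx_t) = (\partial\vmu_t/\partial\vx_t)^\top \nabla_{\vmu}\log p(\vy\mid\vx_t).
\]
Two facts finish the argument:
\begin{itemize}
\item Differentiating Tweedie gives $\partial\vmu_t/\partial\vx_t = \frac{a_t}{\sigma_t^2}\bm{\Sigma}$, a symmetric matrix.
\item Integrating by parts in the Gaussian, $\nabla_{\vmu}\int p(\vy\mid\vx_0)\,\mathcal{N}(\vx_0;\vmu,\bm{\Sigma})\,d\vx_0 = \int \nabla_{\vx_0}p(\vy\mid\vx_0)\,\mathcal{N}\,d\vx_0$, so after normalizing, $\nabla_\vmu \log p(\vy\mid\vx_t) = \mathbb{E}_{p(\vx_0\mid\vx_t,\vy)}[\nabla_{\vx_0}\log p(\vy\mid\vx_0)]$.
\end{itemize}
Combining the two gives unbiasedness of $\Theta_{low}$. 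As a consistency check, under the Gaussian assumption Stein's lemma applied to $\Theta_{high}$ recovers the same identity.

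\textbf{Variance claims.} For these I would specialize to the tractable local model of a linear-Gaussian likelihood $\vy = \mA\vx_0 + \varepsilon$ with $\varepsilon\sim\mathcal{N}(0,\sigma_\vy^2\mI)$, which I expect to be the main obstacle in making the claims rigorous. In this model $p(\vx_0\mid\vx_t,\vy)$ is Gaussian with covariance $\bm{\Sigma}_{post} = (\bm{\Sigma}^{-1} + \mA^\top\mA/\sigma_\vy^2)^{-1}$. The two variances are then:
\begin{itemize}
\item $\Var(\Theta_{high}) = \frac{a_t^2}{\sigma_t^4}\bm{\Sigma}_{post}$.
\item $\Var(\Theta_{low}) = \frac{a_t^2}{\sigma_t^4}\bm{\Sigma}\,\frac{\mA^\top\mA}{\sigma_\vy^2}\,\bm{\Sigma}_{post}\,\frac{\mA^\top\mA}{\sigma_\vy^2}\,\bm{\Sigma}$, since the likelihood score is affine, equal to $\mA^\top(\vy - \mA\vx_0)/\sigma_\vy^2$.
\end{itemize}
The limits then follow from the asymptotics of $\bm{\Sigma}$:
\begin{itemize}
\item As $\sigma_t\to 0$, $\bm{\Sigma}\approx\sigma_t^2\mI$ and $\bm{\Sigma}_{post}\approx\sigma_t^2\mI$. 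This gives $\Var(\Theta_{high}) = O(1/\sigma_t^2)$ and $\Var(\Theta_{low}) = O(\sigma_t^2)\to 0$.
\item As $a_t\to 0$, the prefactor $a_t^2$ sends $\Var(\Theta_{high})\to 0$. Here $\bm{\Sigma}\to\bm{\Sigma}_0$, and the ratio of the two expressions yields the $\sigma_t^4/\sigma_\vy^4$ factor. Getting this requires tracking how the normalization of $\bm{\Sigma}$ relative to $\sigma_t^2$ enters, and that bookkeeping is where I expect the most care to be needed.
\end{itemize}
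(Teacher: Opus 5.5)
Your proof is correct. For $\Theta_{high}$ it is essentially the paper's argument: the paper reaches the same identity $\nabla_{\vx_t}\log p(\vy\mid\vx_t)=\frac{a_t}{\sigma_t^2}(\E[\vx_0\mid\vx_t,\vy]-\E[\vx_0\mid\vx_t])$ by applying Tweedie's formula to both $p(\vx_t\mid\vy)$ and $p(\vx_t)$. For the variances it uses the same linearization.

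The genuine difference is $\Theta_{low}$. The paper's argument is the one you list only as a consistency check. The zero-mean-score identity for $p(\vx_0\mid\vx_t,\vy)$, combined with $\nabla_{\vx_0}\log p(\vx_0\mid\vx_t)=-\bm{\Sigma}_{\vx_0\mid\vx_t}^{-1}(\vx_0-\E[\vx_0\mid\vx_t])$, gives $\E[\vx_0\mid\vx_t,\vy]=\E[\vx_0\mid\vx_t]+\bm{\Sigma}_{\vx_0\mid\vx_t}\E[\nabla_{\vx_0}\log p(\vy\mid\vx_0)]$. This is then substituted into the $\Theta_{high}$ identity.

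Your chain rule through Tweedie's Jacobian needs two further facts. First, $\bm{\Sigma}_{\vx_0\mid\vx_t}$ must not vary with $\vx_t$. Second, $\partial\vmu_t/\partial\vx_t$, which is $\frac{a_t}{\sigma_t^2}$ times the \emph{true} $\Cov(\vx_0\mid\vx_t)$, must equal $\frac{a_t}{\sigma_t^2}\bm{\Sigma}_{\vx_0\mid\vx_t}$. Both hold when the Gaussian assumption is exact. In fact exactness forces a Gaussian prior, because the $\vx_0$-Hessian of $\log p(\vx_0\mid\vx_t)$ is $\nabla^2\log p(\vx_0)-\frac{a_t^2}{\sigma_t^2}\mI$, independent of $\vx_t$. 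So your argument proves the lemma as stated.

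In the regime the method actually runs in, the situation differs. There the prior is learned and non-Gaussian, and $\bm{\Sigma}_{\vx_0\mid\vx_t}$ comes from Lemma~\ref{lemma:covariance}. The Jacobian of the Tweedie mean is then no longer $\frac{a_t}{\sigma_t^2}\bm{\Sigma}_{\vx_0\mid\vx_t}$, so your identity breaks. The paper's Stein argument still holds for any mean and covariance: it shows that $\Theta_{low}$ and $\Theta_{high}$ have exactly equal means under the approximate conditional the sampler draws from. That is what justifies choosing between them by variance. What your route buys in exchange is interpretation: the prefactor $\frac{a_t}{\sigma_t^2}\bm{\Sigma}_{\vx_0\mid\vx_t}$ appears as a denoiser Jacobian applied to a smoothed likelihood score.

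On the variances, the paper bounds $\Var(\Theta_{high})$ via $\Cov(\vx_0\mid\vx_t,\vy)\preceq\Cov(\vx_0\mid\vx_t)$ rather than computing $\bm{\Sigma}_{post}$. Your exact form additionally shows that the $1/\sigma_t^2$ rate is attained.

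The bookkeeping you left open closes as follows. Your two formulas give $\Var(\Theta_{low})=\bm{M}\,\Var(\Theta_{high})\,\bm{M}^\top$ with $\bm{M}=\bm{\Sigma}\mA^\top\mA/\sigma_\vy^2$. The comparison must therefore be made in operator norm, since a matrix ratio is undefined and $\bm{M}V\bm{M}^\top\preceq cV$ need not hold. This yields the factor $\|\mA\|^4\|\bm{\Sigma}\|^2/\sigma_\vy^4$. Since $\|\bm{\Sigma}\|=\lambda\sigma_t^2/(\sigma_t^2+a_t^2\lambda)$ with $\lambda=\lambda_{\max}(\bm{\Sigma}_0)$, the normalization $\lambda=1$ of Corollary~\ref{corollary:eigenbound} gives $\sigma_t^4/(\sigma_t^2+a_t^2)^2$. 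This is the stated $\sigma_t^4/\sigma_\vy^4$ factor up to $\|\mA\|^4$.
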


A proof of these bounds is displayed in Appendix \ref{sec:variance-bound}. The lack of the measurement $\vy$ in the high noise estimator bounds the worst possible case, where the data is weak; when the measurement contains a lot of information, the variance would be much lower. As such, $\Theta_{high}$ is used more towards the beginning of the sampling process
while $\Theta_{low}$ is used towards the end. 
As both estimators are cheap to compute, we can simply evaluate both and choose the lower variance estimator at each timestep. 
Finally, as we want an unbiased estimate of the \textit{squared} likelihood score, squaring the estimator is biased: $\mathbb{E} \|\Theta\|^2 =  \|\mathbb{E} \Theta\|^2 + \text{Tr}(\Cov(\Theta))$. Therefore, for each intermediate sample $\vx_t$, we propose to sample two i.i.d  $\tilde{\vx}_0^{(1)}, \tilde{\vx}_0^{(2)} \sim p(\vx_0 \mid \vx_t, \vy)$, giving us the following unbiased estimator: $\Theta(\tilde{\vx}_0^{(1)})^T \Theta(\tilde{\vx}_0^{(2)})$. The full method is displayed in Algorithm \ref{alg:daps}, and a visualization for both in-distribution and out-of-distribution measurements is visualized in Figure \ref{fig:visualization}. 
\begin{algorithm}
\caption{One posterior sample-path of \method implemented with DAPS}\label{alg:daps}
\begin{algorithmic}[1]
\REQUIRE Score-based model $s_\theta$, measurement $\vy$, noise schedule $\sigma_t$, $(t_i)_{i \in \{1, \ldots, N\}}$
\STATE Sample $\vx_{t_N} \sim \mathcal{N}(0, \sigma_{t_N}^2 \mI)$.
\STATE {\color{blue} Initialize the integral sum, $f = 0$.}
\FOR{$i = N, N-1, \ldots, 1$}
    \STATE Compute $\hat{\vx} = \mathbb{E}[\vx_0 \mid \vx_t]$ using Tweedie's formula and $s_\theta$.
    \STATE Sample $\tilde{\vx}_0^{(1)}, {\color{blue} \tilde{\vx}_0^{(2)}} \sim p(\vx_0 \mid \vx_{t_i}, \vy)$ using Langevin with $p(\vx_0 \mid \vx_t) = \mathcal{N}(\hat{\vx}, \bm{\Sigma}_{\vx_0 \mid \vx_t})$. \hspace*{\fill} (\ref{sec:covariance})
    \STATE Sample $\vx_{t_{i-1}} \sim \mathcal{N}(a_{t_{i-1}}\tilde{\vx}_0^{(1)}, \, \sigma_{t_{i-1}}^2 I)$.
    \STATE  {\color{blue}  $f_i = \operatorname{arg\,min}\limits_{\Theta \in \{\Theta_{\mathrm{low}}, \Theta_{\mathrm{high}}\}}\Var(\Theta(\tilde{\vx}_0^{(1)})^T \Theta(\tilde{\vx}_0^{(2)}))$ computed using all sample-paths.} \hspace*{\fill} (\ref{sec:score-estimation})
    \STATE {\color{blue} Update the integral sum $f \gets f + (t_i - t_{i-1}) \left(\sigma_t' \sigma_t - \sigma_t^2 \frac{a_t'}{a_t}\right)  f_i $.}
\ENDFOR
\STATE \textbf{return} $\vx_0$, {\color{blue} $-f + \log p(\vy \mid \vx_0)$}
\end{algorithmic}
\end{algorithm}

\section{Experiments}


We compare \method 
and \methodtilted 
with five baselines: naive Monte Carlo (Naive MC),
thermodynamic integration (TI), annealed importance sampling (AIS), Sequential Monte Carlo (SMC), and the original DAPS covariance heuristic (Original DAPS heuristic). Baseline methods are described in Appendix \ref{sec:baseline-information} and all tuning and implementation details are provided in Appendix \ref{sec:tuning-details}.

\subsection{Benchmarking on Gaussian Mixture Prior with Ground Truth Evidence}\label{sec:mog-gaussian}
\begin{table}
\centering
{\tiny
\setlength{\tabcolsep}{6pt}
\renewcommand{\arraystretch}{1.1}
\begin{tabular}{l|cc|cc|cc}
\hline
 & \multicolumn{2}{c|}{\textbf{In-distribution $\vx^*$}} & \multicolumn{2}{c|}{\textbf{Out-of-distribution $\vx^*$}} & \multicolumn{2}{c}{\textbf{Saddle point $\vx^*$}} \\
 & Estimate & Rel. Error ↓ & Estimate & Rel. Error ↓ & Estimate & Rel. Error ↓ \\
\hline
\textbf{Ground truth $\log p(\vy)$} & $-128.7 \pm 0.0$ & $0.0$ & $-1680.2 \pm 0$ & $0.0$ & $-251.3 \pm 0.0$ & $0.0$ \\
\hline
Naive MC (1000 samples)   & $-3283 \pm 130$ & $2451\%$ & $-41289 \pm 577$ & $2357\%$ & $-6029 \pm 170$ & $2299\%$ \\
Naive MC (10000 samples)  & $-3041 \pm 113$ & $2263\%$ & $-40123 \pm 487$ & $2288\%$ & $-5666 \pm 144 $& $2155\%$ \\
Original DAPS heuristic         & $-316.2 \pm 79.5$ & $146\%$ & $-1735.4 \pm 4.7$ & $3.3\%$ & $-269.6 \pm 3.3$ & $7.3\%$ \\
TI                         & $-132.8 \pm 1.3$ & $3.2\%$  & $-1773.8 \pm 11.8$& $5.6\%$ & $-254.2 \pm 3.2$ & $1.2\%$ \\
AIS                        & $-135.9 \pm 2.5$ & $5.6\%$  & $-1754.6 \pm 7.1$ & $4.4\%$ & $-259.1 \pm 2.9$ & $3.1\%$ \\
SMC                        & $-132.1 \pm 1.0$ & $2.6\%$  & $-1701.1 \pm 5.8$ & $1.2\%$ & $-253.0 \pm 3.4$ & $\mathbf{0.7\%}$ \\
\hline
\method                    & $-126.8 \pm 2.8$ & $\mathbf{1.5\%}$ & $-1674.8 \pm 4.7$ & $\mathbf{0.3\%}$ & $-253.3 \pm 3.1$ & $0.8\%$ \\
\methodtilted\textbf{*}    & $-131.8\pm2.0$ &  $2.4\%$ & $-1912.1 \pm 6.5$ & $13.8\%$ & $-245.9 \pm 3.6$ & $2.1\%$ \\
\hline
\end{tabular}}
\caption{Evidence estimates on Mixture of Gaussians study with analytic $\log p(\vy)$ and ground truth $\vx^*$ in-distribution (top), out-of-distribution (middle), and at a saddle point (bottom). Mean and standard deviation of 50 trials displayed each with 20 sample paths. 
When the analytic prior score is available, \method performs comparably to the baselines for all cases, while \methodtilted performs well when the ground truth is roughly in-distribution. 
\textbf{*Important:} as \methodtilted outputs an unnormalized estimate, we normalize via the analytic constant factor for ease of comparison.}
\label{table:ablation}
\end{table}

To verify \method, we first test on a multimodal 1000-D Gaussian mixture prior and linear forward model  $\vy = \mA \vx + \bm{\varepsilon}, \bm{\varepsilon}\sim\mathcal{N}(0, \sigma^2\mI)$
where the model evidence has an analytic form to compare to. We use two Gaussian components with means at $(\bm{-0.75}, \bm{0.75})$ and identical covariances of $0.25 \mI$. The entries of $\mA \in \mathbb{R}^{200 \times 1000}$ are i.i.d. $\mathcal{N}(0, \frac{1}{200})$, and $\sigma = 0.1$.  We run three experiments where the ground truth $\vx^*$ has varying model evidence to test the robustness of our method: in-distribution $\vx^*$, out-of-distribution $\vx^*$, and $\vx^*=0$ located at a saddle point. The analytic $\nabla_\vx \log p (\vx)$ is used to sample the intermediate distributions for TI, AIS, and SMC. In contrast, for \method the analytic $\nabla_{\vx_t} \log p (\vx_t)$ is used
to compute $\mathbb{E}[\vx_0 \mid \vx_t]$ for $\sigma_t > 0.05$
and the analytic score is never used.
All methods are tested using 100 discretization steps (equivalently, annealing time steps for \method) and 20 sample paths for 50 trials. The mean and standard deviation is reported.
Results are displayed in Table \ref{table:ablation}. \method gives nearly unbiased estimates of the evidence in all cases, with comparable performance to the strongest baseline (SMC), despite never using the true prior score. To probe the effect of the covariance choice discussed in Sec.~\ref{sec:covariance}, we first consider the case where $\vx^*$ lies within a prior mode (in-distribution). Here the original DAPS heuristic consistently overestimates the variance of the marginals, pushing posterior samples into the wrong mode and inducing a large bias in the evidence estimate. Even when $\vx^*$ is outside any single mode, the original heuristic continues to overestimate the evidence. In contrast, our improved covariance heuristic, incorporated in \method, always generates samples from the correct mode and eliminates this bias, even when the true $\vx^*$ lies in regions of local curvature far from the unimodal Gaussian assumption.


Unlike \method, \methodtilted only performs well for roughly in-distribution tasks. For out-of-distribution measurements $\vy$, while the PnP-DM marginals asymptotically anneal to the correct posterior, we find that in the finite-step case this annealing path results in greater bias than DAPS. 
Because out-of-distribution measurements frequently arise in model selection, we conclude that PnP-DM is less well suited for this task.

\subsection{Model Selection Using Learned Score on Non-convex Inverse Problems}\label{sec:phase-retrieval}
\begin{figure}
    \centering
    \includegraphics[width=0.8\linewidth]{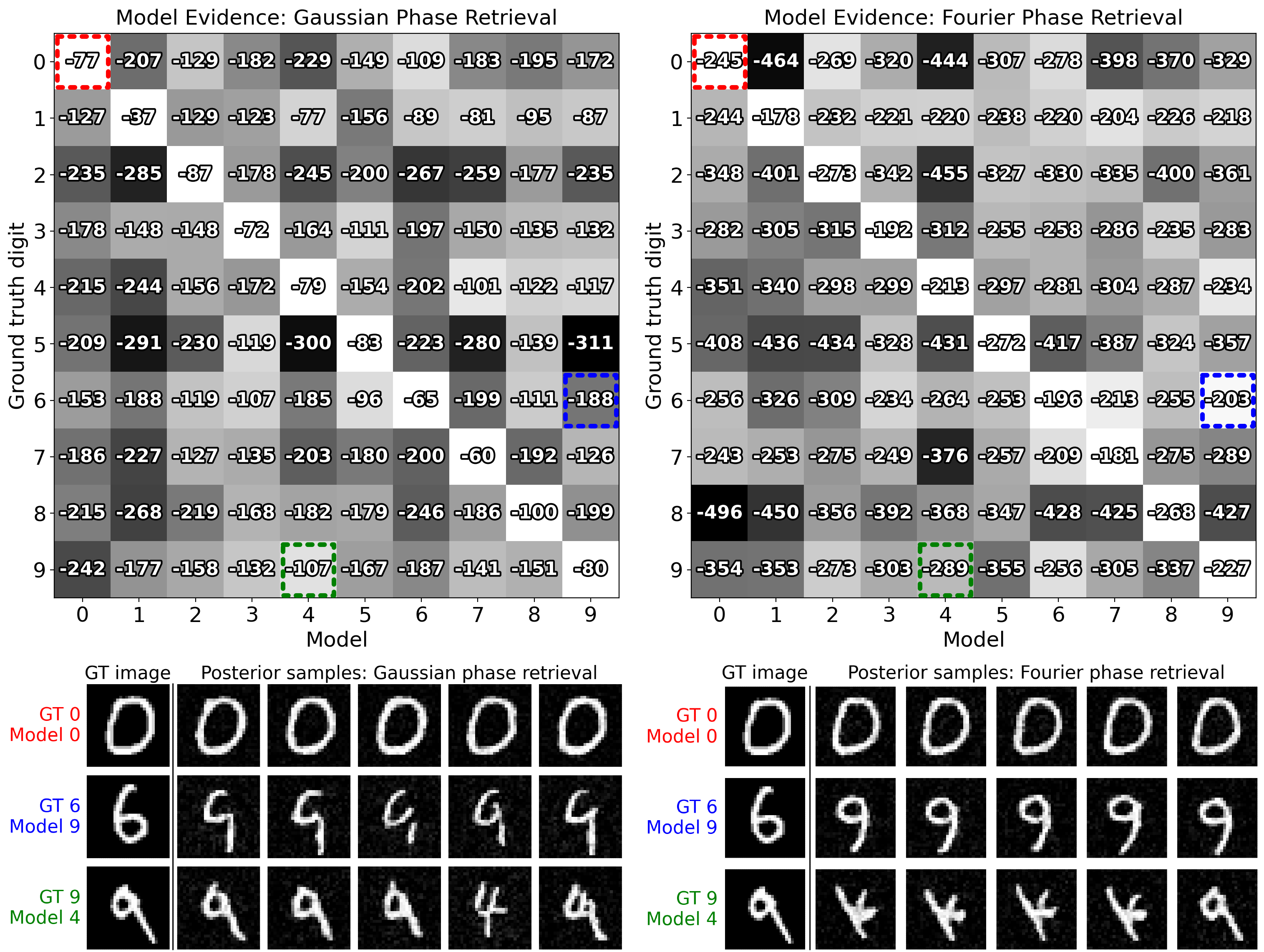}
    \caption{Model evidence confusion matrix for Gaussian phase retrieval (\textbf{left}) and Fourier phase retrieval (\textbf{right}) for each (ground truth measurement, model) pair of MNIST digits. Our method selects the correct model for all cases. Posterior samples shown for dotted matrix entries below. For Gaussian phase retrieval, \method estimates higher model likelihood for visually similar digits, such as 4 and 9. For Fourier phase retrieval, both translations and reflections are invariances as seen in the posterior samples, so \method estimates a high likelihood of model 9 given a measurement of a 6.}
    \label{fig:phase-retrieval-matrix}
\end{figure}

We test both \method and the SMC baseline on two non-convex inverse problems, Gaussian phase retrieval and Fourier phase retrieval. In Gaussian phase retrieval, the forward model is $\vy = |\mA \vx| + \bm{\varepsilon}$, where $\mA \in \mathbb{C}^{m \times n}$ consists of i.i.d Gaussian $\mathcal{N}(0, \frac{1}{m})$ and $\bm{\varepsilon} \sim \mathcal{N}(0, \sigma^2 \mI)$. In Fourier phase retrieval, the forward model is $\vy = |\mF \vx| + \bm{\varepsilon}, \bm{\varepsilon} \sim \mathcal{N}(0, \sigma^2 \mI)$ where $\mF$ is the discrete Fourier transform. In our experiments, we use $m = 0.1n$ and $\sigma = 0.3$. While Fourier phase retrieval has many more measurements, the structured forward operator means that both translation and flips of $\vx$ are invariances under the forward model. We test with ten diffusion model priors, one on each MNIST \citep{deng2012mnist} digit, and compute the evidence for each prior given a measurement of each digit. For \method, we use 50 annealing iterations and 20 generated sample-paths for each entry. For SMC, we use the learned $\nabla_{\vx} \log p(\vx_t)$ as a surrogate prior score  at different low noise levels.

\method results are shown in Figure \ref{fig:phase-retrieval-matrix}, and SMC results are displayed in Appendix \ref{sec:baseline-results-appendix}. For both inverse problems, \method always selects the correct model given a single noisy measurement $\vy$, while SMC often fails to do so, demonstrating that methods relying on a clean prior score are not suitable for diffusion-based model selection.
For Gaussian phase retrieval, \method predicts higher likelihoods of digits that look similar, such as 4 and 9 (Figure \ref{fig:phase-retrieval-matrix}, bottom left, third row). For Fourier phase retrieval, \method predicts high evidence for the MNIST 9 prior when the ground truth image is a 6 (and vice versa), but as both vertical and horizontal flips are invariances under Fourier phase retrieval, performing these operations on a 6 results in a 9 (Figure \ref{fig:phase-retrieval-matrix}, bottom right, second row). The normalized evidence can also provide useful insights: as the MNIST digit 1 prior has higher evidence for in-distribution $\vy$ and lower evidence for out-of-distribution $\vy$, it must be a narrower prior. On the other hand, the MNIST 3, 5 and 8 priors have higher out-of-distribution model evidence, suggesting that they are wider priors, likely due to the variation in which humans write these digits.


\subsection{Evaluating Prior Model Evidence for Real M87* Black Hole Data}\label{sec:black-hole}

In this section, we perform model selection and validation on real black hole M87* observations from April 6, 2017 \citep{dataset, event2019first1, event2019first4}. The need for such analysis was first emphasized during the release of the initial M87* images by the Event Horizon Telescope Collaboration \citep{event2019first6}, where the goal was not only to reconstruct images but also to test which simple parametric models best matched the data—particularly whether a ring provided the most suitable description \citep{event2019first6, broderick2020themis}. In addition, the team directly compared the M87* data to a discrete set of synthetic black hole simulations (GRMHD) to assess which physical model parameters were most consistent with the observations \citep{fromm2019using, event2019first6, event2019first5}. However, as noted in \cite{event2019first6}, these conclusions are only valid to the extent that the selected simulations provide an adequate description of M87* -- a question that can be answered by our proposed method.


Measurements, known as {\it visibilities}, are obtained by combining signals collected by telescopes observing simultaneously at different locations around the world. The forward model for complex visibilities between two telescopes $i$, $j$ is $\vv_{i,j} = g_i g_j e^{i(\phi_i - \phi_j)} \vv_{i,j}^* + \bm{\varepsilon}_{i,j}$, where $\vv_{i,j}^*$ is the true visibilities measurements, $g_i$ is a station-dependent gain error, $\phi_i$ is a station-dependent phase error arising from atmospheric noise and $\bm{\varepsilon}_{i,j}$ is Gaussian thermal noise. To become invariant to the phase error, we use the closure phase $\vy_{cp} = \vv_{i,j} \vv_{j,k} \vv_{k,i}$ for a minimal set of 3 telescopes $(i,j,k)$ and for similar invariance to the gain error, we use the closure amplitude $\vy_{ca} = \frac{\vv_{i,j} \vv_{k,l}}{\vv_{i,k} \vv_{j,l}}$ for a minimal set of 4 telescopes  $(i,j,k,l)$ \citep{thompson2017interferometry, blackburn2020closure}. We use the InverseBench \citep{zheng2025inversebench} library to compute the likelihood. We generate 20 posterior samples using 50 annealing steps and 1000 Langevin dynamics steps per iteration. Gaussian approximation DAPS takes around 4 minutes to compute, while exact DAPS takes around 30 minutes to compute for the same parameters, a speedup of around 7x.

\begin{figure}[H]
    \centering
    \includegraphics[width=0.85\linewidth]{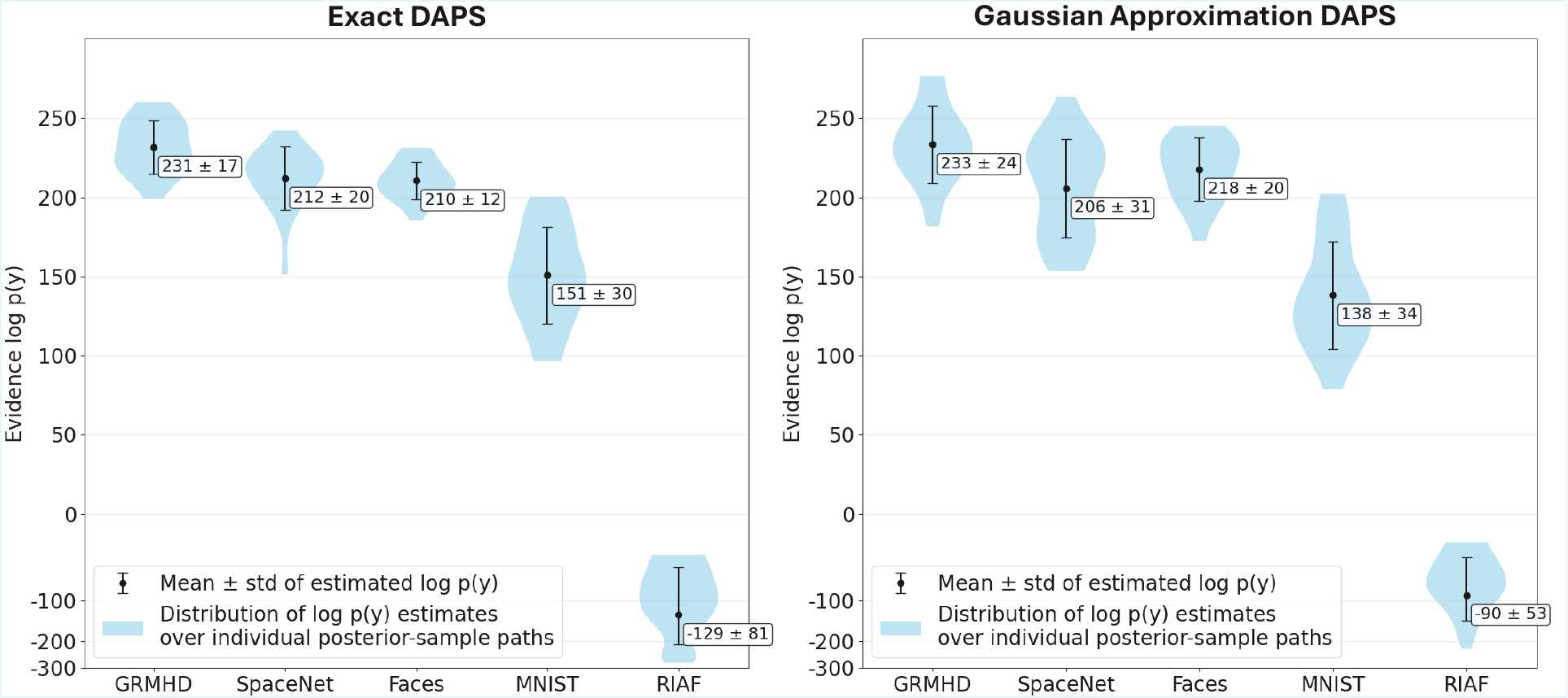}
    \caption{Model evidence estimates on real M87* observations across 5 different priors using exact DAPS (\textbf{left}) and Gaussian approximation DAPS (\textbf{right}). Our method concludes that, of these prior models, GRMHD is the most likely model. The violin plots show the distribution of evidences from 20 different posterior-sample paths from the DAPS sampling process, and the mean is the overall evidence estimate. Gaussian approximation DAPS gives highly accurate evidence estimates with 7x less compute than exact DAPS, but with slightly higher variance.}
    \label{fig:M87-model-selection}
\end{figure}
\vspace{-0.2in}
\begin{figure}[H]
    \centering
    \includegraphics[width=0.85\linewidth]{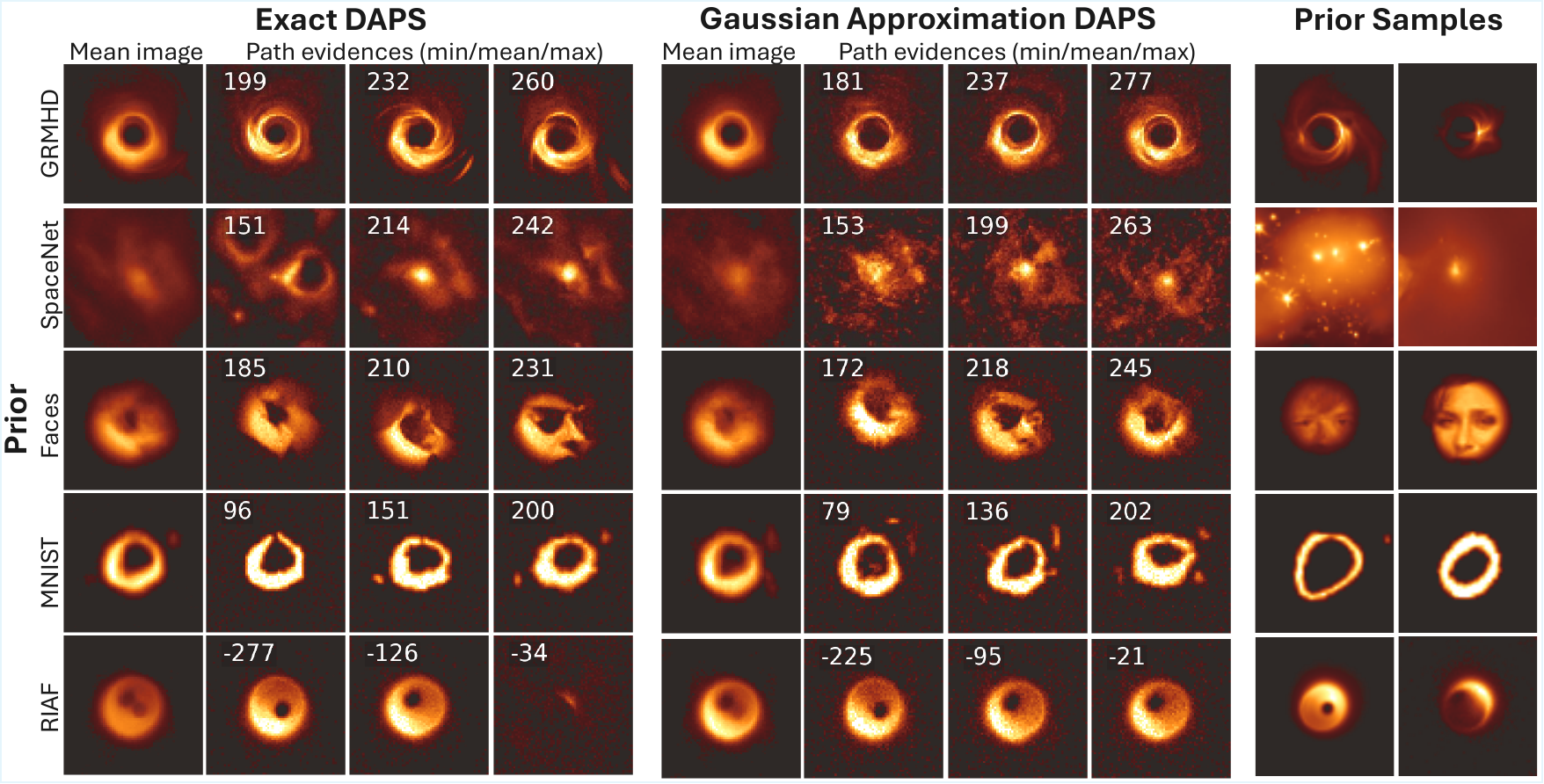}
    \caption{M87* reconstructions using the 5 priors using exact DAPS (\textbf{left}) and Gaussian approximation DAPS (\textbf{middle}). While all posterior samples have a data fit of around reduced $\chi^2 \approx 1$, they have varying path evidence estimates (shown in the top left of each image). \textbf{Right:} Example unconditional samples from each prior.}
    \label{fig:M87-model-selection-images}
\end{figure}
\vspace{-0.2in}
\subsubsection{Model selection}\label{sec:m87-model-selection}

We trained five candidate diffusion priors: (1) on 48000 General-Relativistic Magnetohydrodynamic (GRMHD) simulations \citep{mizuno2022grmhd}, (2) on 9070 Radiatively Inefficient Accretion Flow (RIAF) simulations \citep{broderick2011evidence}, (3) on 11000 general space images (SpaceNet) \citep{alamimam2024flare}, (4) on 5923 MNIST \citep{deng2012mnist} digit 0 images, and (5) on 48000 radially tapered CelebA \cite{liu2015faceattributes} images (Faces). Additionally, we augment our data during training to improve generality and robustness of our priors. The GRMHD images are randomly flipped horizontally and/or vertically and zoomed in and out in the range $[0.75, 1.25]$. The RIAF images were randomly flipped, zoomed in and out in the range $[0.75, 1.25]$, and rotated uniformly in the range $[0, 2\pi]$. The MNIST digit 0 images were zoomed in and out in the range $[0.5, 1]$.

Model selection results for the real M87* data can be seen in Figure \ref{fig:M87-model-selection}. The mean reconstruction and posterior samples with the minimum, mean and maximum path evidence are shown on in  Figure \ref{fig:M87-model-selection-images}. Synthetic tests validating the method on this problem are shown in Appendix~\ref{sec:syntheticmodelselection}.  \method determines that, for static imaging, GRMHD is the most likely prior for the M87* observations, followed by SpaceNet, Faces, MNIST, and finally RIAF. While SpaceNet contains a few black hole images in its training data, our method penalizes it for being too general, as demonstrated with its wide range of unconditional samples (Figure \ref{fig:M87-model-selection}, right, second row). The RIAF model is a very narrow prior that enforces a strong assumption on the structure of black holes, so any out-of-distribution images (such as M87* as determined by our method) will have low likelihood. We also find that Gaussian approximation DAPS gives nearly identical estimates as exact DAPS with 7x less compute.

\subsubsection{Model validation}

\begin{figure}
    \centering
    \includegraphics[width=0.85\linewidth]{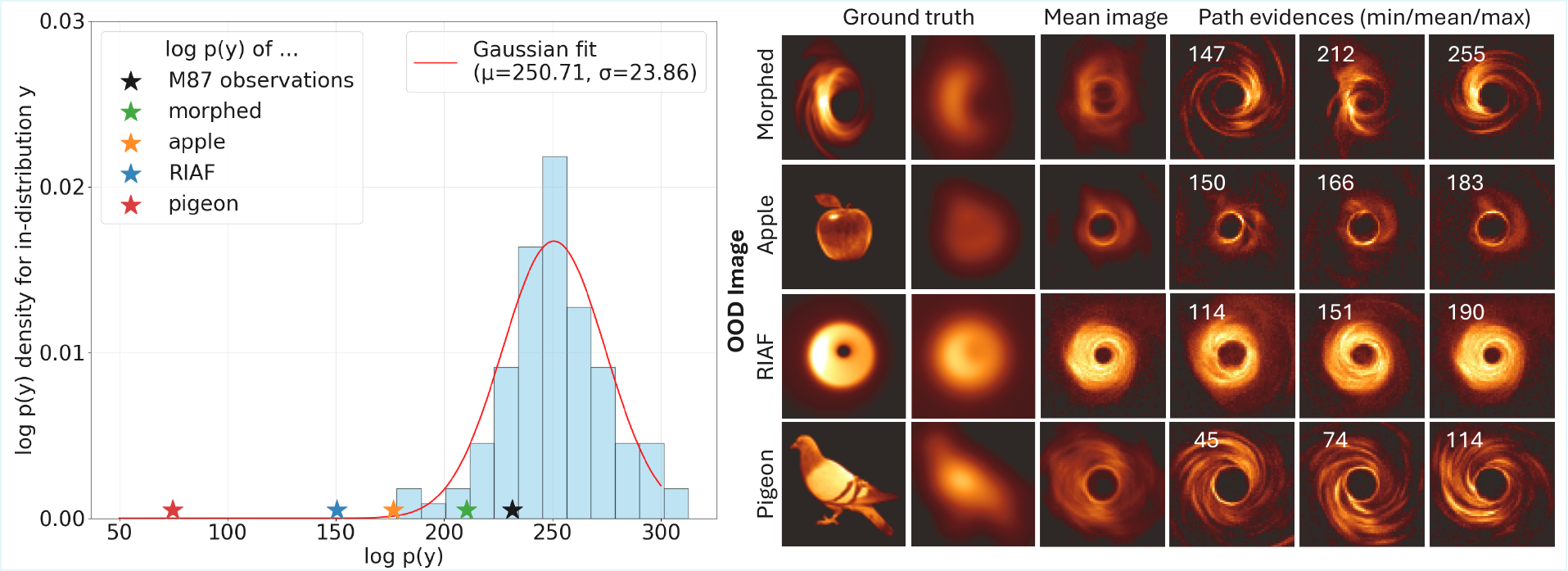}
    \caption{\textbf{Left:} GRMHD model validation results on M87* observations by comparing to evidence of in-distribution measurements $\vy$. Our method shows that the evidence of M87* observations have a $z$-score of about -0.81 compared to the evidence distribution of GRMHD measurements, indicating that M87* is statistically in-distribution of GRMHD. The evidence of simulated measurements of out-of-distribution images are also shown, demonstrating that measurements from out-of-distribution (OOD) images have OOD evidence. \textbf{Right:} Mean reconstruction and posterior samples of the OOD images with the highest and lowest distance to the GRMHD prior. The blurred ground truth image, or the maximum resolution that can be obtained from measurements, is also displayed.}
    \label{fig:M87-model-validation}
\end{figure}
\vspace{-0.1in}
We also diagnose the validity of the GRMHD model for static imaging of the M87* observations by computing the evidence $\log p(\vy)$. We simulated measurements from 100 GRMHD test images using the same forward model as the EHT observations of M87*, and then computed the evidence for these measurements. Under a central limit theorem argument, we can approximate this distribution as Gaussian. We then compared the evidence of the true M87* observation to this Gaussian as well as the evidence of out-of-distribution measurements under the same forward model to assess if the GRMHD prior is valid for M87*.

Results are shown in Figure \ref{fig:M87-model-validation}. The fitted Gaussian has $\mu=250.71$ and $\sigma=23.86$. M87* has an evidence of 231.46, giving it a z-score of -0.81 and p-value of 0.209 with respect to the previous Gaussian fit. Therefore, M87* likely lies within the support of the GRMHD image prior, offering credence to our current physical model of black holes through the images it produces, while still leaving open the possibility of an improved model. On the other hand, the morphed GRMHD image has evidence 210.37, giving it a z-score of -1.69 and p-value of 0.046, suggesting that it is not in the true GRMHD distribution. The apple, RIAF, and pigeon images have even lower evidences of 176.52, 150.41, and 74.68 respectively and are all considered out-of-distribution. 

\vspace{-0.1in}
\section{Conclusion}
\vspace{-0.1in}
We introduce \method, a method that estimates the Bayesian model evidence when solving inverse problems using diffusion model priors. \method computes the KL divergence between the posterior and prior by integrating along posterior time-marginals, a task made easy by using the intermediate samples of recent diffusion posterior sampling methods. Empirically, \method outperforms all baseline methods and we demonstrate how \method can be used on a real-world black hole imaging problem. Altogether, \method makes it possible to harness diffusion priors not only for reconstruction but also for principled model selection and validation, laying the foundation for more reliable inference in scientific imaging.

\subsubsection*{Ethics statement}
Our method is a statistical estimator and can occasionally select the incorrect model, potentially leading to incorrect conclusions. Furthermore, while it could in principle be applied to sensitive characteristics (e.g., facial recognition), we do not support its use in this context.

\subsubsection*{Reproducibility statement}
All code and trained models are available at https://github.com/fredwang25/DiME. All mathematical derivations are detailed in the appendix. The MNIST dataset is available through torchvision \citep{marcel2010torchvision}, the CelebA dataset is available at https://mmlab.ie.cuhk.edu.hk/projects/CelebA.html, and the SpaceNet dataset is available at https://www.kaggle.com/datasets/razaimam45/spacenet-an-optimally-distributed-astronomy-data/. The RIAF and GRMHD datasets are not currently available for public use.


\subsubsection*{Acknowledgments}
This work was supported by NSF Award 2048237, an Amazon AI4Science Discovery Award, OpenAI, and a Sloan Research Fellowship. FW is supported by a Kortschak Fellowship. The authors would like to thank Angela Gao for helpful discussions.

\bibliography{iclr2026_conference}
\bibliographystyle{iclr2026_conference}

\newpage
\appendix
\section{Proofs}\label{sec:proof}

Although the main text uses the diffusion SDE formulation, we introduce the Probability Flow ODE (PF-ODE) for the proofs and for Appendix \ref{sec:generalizing}. This ODE has identical marginals as the SDE under ideal conditions. Given diffusion process $\vx_t = a_t \vx_0 + \sigma_t \vz_t, \vz_t \sim \mathcal{N}(0, \mI)$, the PF-ODE is:
    \begin{align}
        d\vx_t \;=\; \Big[ \frac{a_t'}{a_t}\,\vx_t \;-\; \left( \sigma_t \sigma_t' - \frac{a_t'}{a_t} \sigma_t^2\right) \nabla_{\vx_t} \log p(\vx_t) \Big]\,dt  = \vv_p(\vx_t) \,dt \label{eq:pf-ode-uncond}
    \end{align}
where $\vv_p(\vx_t)$ is the unconditional flow. The conditional flow $\vv_p(\vx_t \mid \vy)$ can be derived via Bayes:
    \begin{align}
        \vv_p(\vx_t \mid \vy) &= \frac{a_t'}{a_t}\,\vx_t \;-\; \left( \sigma_t \sigma_t' - \frac{a_t'}{a_t} \sigma_t^2\right) \nabla_{\vx_t} \log p(\vx_t \mid \vy)\\ &=\vv_p(\vx_t) - \left(\sigma_t' \sigma_t - \sigma_t^2 \frac{a_t'}{a_t} \right)\nabla_{\vx_t} \log p(\vy \mid \vx_t). \label{eq:pf-ode}
    \end{align}
For the following proof, we impose the regularity and smoothness assumptions listed in Appendix A of \cite{song2021maximum} for both the prior marginals $p(\vx_t)$ and the posterior marginals $\pi(\vx_t \mid \vy)$.

\begin{restatable}{lemma}{klderivative}\label{lemma:kl-derivative}
 Let $p(\vx_t)$ be the unconditional diffusion marginal and $\pi(\vx_t \mid \vy) \propto p(\vx_t) \pi(\vy \mid \vx_t)$ be a posterior time-marginal obtained at diffusion timestep $t$. Denote $\vv_p(\vx_t)$ and $\vv_\pi(\vx_t \mid \vy)$ as their corresponding velocity vector fields.
 We have the following identity:
\begin{align}
        \frac{\diff}{\diff t} \KL (\pi(\vx_t \mid \vy) \Vert p(\vx_t)) &= \mathbb{E}_{\vx_t \sim \pi(\vx_t \mid \vy)} \left[ \left( \vv_\pi(\vx_t \mid \vy) - \vv_p(\vx_t)\right)^T \nabla_{\vx_t} \log \pi(\vy \mid \vx_t) \right].
\end{align}
\end{restatable}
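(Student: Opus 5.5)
Write $\pi_t := \pi(\vx_t \mid \vy)$ and $p_t := p(\vx_t)$. The plan is to differentiate the KL divergence under the integral, replace the time derivatives of the densities using the continuity equations of the two flows, and integrate by parts. By construction (the PF-ODE in \eqref{eq:pf-ode-uncond} and its conditional analogue), the densities evolve as
\begin{align}
\partial_t p_t = -\nabla\cdot(p_t \vv_p), \qquad \partial_t \pi_t = -\nabla\cdot(\pi_t \vv_\pi).
\end{align}
Differentiating $\KL(\pi_t\Vert p_t)=\int \pi_t \log(\pi_t/p_t)\,d\vx_t$ gives
\begin{align}
\frac{\diff}{\diff t}\KL = \int \partial_t\pi_t \log\frac{\pi_t}{p_t}\,d\vx_t + \int \partial_t \pi_t\,d\vx_t - \int \frac{\pi_t}{p_t}\partial_t p_t\,d\vx_t .
\end{align}
The middle term vanishes because $\pi_t$ integrates to one for all $t$.

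Next, substitute the continuity equations and integrate by parts in each remaining term, discarding boundary terms by the decay and regularity assumptions of \cite{song2021maximum}. The first term becomes $\int \pi_t\, \vv_\pi^T \nabla \log(\pi_t/p_t)$. The third term becomes $-\int p_t \vv_p^T \nabla(\pi_t/p_t) = -\int \pi_t \vv_p^T \nabla\log(\pi_t/p_t)$. Adding them yields
\begin{align}
\frac{\diff}{\diff t}\KL = \mathbb{E}_{\pi_t}\big[(\vv_\pi-\vv_p)^T\nabla\log(\pi_t/p_t)\big].
\end{align}

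To finish, use Bayes' rule $\pi(\vx_t\mid\vy)=p(\vx_t)\pi(\vy\mid\vx_t)/\pi(\vy)$, with normalizer $\pi(\vy)$ independent of $\vx_t$. This gives $\nabla_{\vx_t}\log(\pi_t/p_t)=\nabla_{\vx_t}\log\pi(\vy\mid\vx_t)$, which is exactly the claimed identity.

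The main obstacle is rigor rather than algebra. We need to justify exchanging $\frac{d}{dt}$ with the integral and to show the boundary terms vanish, which requires enough tail decay of $\pi_t \vv\log(\pi_t/p_t)$ and of $\pi_t \vv_p$. I would handle this by appealing to the cited regularity assumptions (smoothness and polynomial growth of the scores), in the same way \cite{song2021maximum} treats the analogous KL-derivative computation. A secondary subtlety is whether $\pi(\vy)$ is genuinely constant in $t$ along the path. If $\pi(\vy\mid\vx_t)$ is not normalized consistently across time, the normalizer still does not depend on $\vx_t$, so its gradient vanishes and the identity holds regardless.
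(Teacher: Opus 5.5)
Your proof is correct and takes essentially the same route as the paper: product rule, the continuity equations for both flows, integration by parts, and Bayes' rule to turn $\nabla\log(\pi_t/p_t)$ into the likelihood score $\nabla_{\vx_t}\log\pi(\vy\mid\vx_t)$. The only difference is cosmetic: you handle the $p_t$ term by integrating $\nabla\cdot(p_t\vv_p)$ against $\pi_t/p_t$ directly, whereas the paper uses the log-continuity equation and then Stein's identity, which is the same integration by parts.
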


\begin{proof}[Proof] 

Simplifying the derivative gives:
\begin{align*}
\frac{\diff}{\diff t} \KL (\pi(\vx_t \mid \vy) \Vert p(\vx_t))
&= \frac{\diff}{\diff t} \int  \pi(\vx_t \mid \vy) \left[\log \pi(\vx_t \mid \vy) - \log p(\vx_t) \right] d\vx_t \\
&= \int \left[\frac{\diff}{\diff t} \pi(\vx_t \mid \vy)\right] \left[\log \pi(\vx_t \mid \vy) - \log p(\vx_t) \right]  d\vx_t  \\
&\quad +\int \pi(\vx_t \mid \vy) \frac{\diff}{\diff t} \left[\log \pi(\vx_t \mid \vy) - \log p(\vx_t) \right] d\vx_t.  && \hfill \text{(Product rule)}
\end{align*}

Now we expand the first term. Let $\vv_p(\vx_t)$, $\vv_\pi(\vx_t \mid \vy)$ be the flows at $p(\vx_t)$, $\pi(\vx_t \mid \vy)$ respectively:
\begin{align*}
    &\quad\int \left[\frac{\diff}{\diff t} \pi(\vx_t \mid \vy)\right] \left[\log \pi(\vx_t \mid \vy) - \log p(\vx_t) \right]  d\vx_t \\
    =&-\int \nabla \cdot (\pi(\vx_t \mid \vy) \vv_\pi(\vx_t \mid \vy)) \big[\log \pi(\vx_t \mid \vy) - \log p(\vx_t) \big]  d\vx_t && \hfill  \text{(Continuity equation)}\\
    =&\phantom{-}\int \pi(\vx_t \mid \vy) \vv_\pi(\vx_t \mid \vy)^T\nabla_{\vx_t} \big[\log \pi(\vx_t \mid \vy) - \log p(\vx_t) \big]  d\vx_t && \hfill \text{(Integration by parts)}\\
    =&\phantom{-}\int \pi(\vx_t \mid \vy) \vv_\pi(\vx_t \mid \vy)^T\nabla_{\vx_t} \log \pi(\vy \mid \vx_t)  d\vx_t. && \hfill  \text{(Bayes)}
\end{align*}

For the second term:
\begin{align*}
    &\qquad \int  \pi(\vx_t \mid \vy) \frac{\diff}{\diff t} \left[\log \pi(\vx_t \mid \vy) - \log p(\vx_t) \right] d\vx_t \\
    &= \mathbb{E}_{\vx_t \sim \pi(\vx_t \mid \vy)}\left[ \frac{\diff}{\diff t}\log \pi(\vx_t \mid \vy) \right] -\int  \pi(\vx_t \mid \vy) \frac{\diff}{\diff t} \log p(\vx_t)d\vx_t     \\
    &= -\int  \pi(\vx_t \mid \vy) \frac{\diff}{\diff t} \log p(\vx_t)d\vx_t  \\   
    &= \phantom{-}\int  \pi(\vx_t \mid \vy)\left[\nabla\cdot \vv_p(\vx_t) + \vv_p(\vx_t)^T \nabla_{\vx_t}    \log p(\vx_t) \right] d\vx_t  && \hfill \text{(Log-continuity equation)} \\
    &=\phantom{-}\int  \pi(\vx_t \mid \vy) \left[ \nabla\cdot \vv_p(\vx_t) + \vv_p(\vx_t)^T \nabla_{\vx_t} \log \pi(\vx_t \mid \vy) \right] d\vx_t && \\
     &\phantom{=} -\int \pi(\vx_t \mid \vy) \vv_p(\vx_t)^T \nabla_{\vx_t} \log \pi(\vy \mid \vx_t) d\vx_t && \hfill \text{(Bayes)} \\
    &= -\int  \pi(\vx_t \mid \vy) \vv_p(\vx_t)^T \nabla_{\vx_t} \log \pi(\vy \mid \vx_t)  d\vx_t.  && \hfill \text{(Stein's identity)} 
\end{align*}

Adding the two terms back together gives us:
\begin{align*}
    \frac{\diff}{\diff t} \KL (\pi(\vx_t \mid \vy) \Vert p(\vx_t))
    &= \int \pi(\vx_t \mid \vy) \left( \vv_\pi(\vx_t \mid \vy) - \vv_p(\vx_t)\right)^T \nabla_{\vx_t} \log \pi(\vy \mid \vx_t)  d\vx_t \\
    &= \mathbb{E}_{\vx_t \sim \pi(\vx_t \mid \vy)} \left[ \left( \vv_\pi(\vx_t \mid \vy) - \vv_p(\vx_t)\right)^T \nabla_{\vx_t} \log \pi(\vy \mid \vx_t) \right].
\end{align*}

\end{proof}

\standardmarginals*
\begin{proof}
Expanding out the KL divergence gives us an expression for the model evidence:
\begin{align*}
\KL(p(\vx_0 \mid \vy) \Vert p(\vx_0)) 
    &= \mathbb{E}_{\vx_0 \sim p(\vx_0 \mid \vy)}\big[\log p(\vx_0 \mid \vy) - \log p(\vx_0) \big] \\
    &= \mathbb{E}_{\vx_0 \sim p(\vx_0 \mid \vy)}\big[\log p(\vy \mid \vx_0) - \log p(\vy) \big] \\
    &= \mathbb{E}_{\vx_0 \sim p(\vx_0 \mid \vy)}\!\big[\,\log p(\vy \mid \vx_0)\,\big] -\log p(\vy) \\
\implies \log p(\vy)   &=\mathbb{E}_{\vx_0 \sim p(\vx_0 \mid \vy)}\!\big[\,\log p(\vy \mid  \vx_0)\,\big] -  \KL(p(\vx_0 \mid \vy) || p(\vx_0)).
\end{align*}

At the final diffusion step, the forward process has destroyed all information about the measurements, so we have $p(\vx_T \mid \vy) = p(\vx_T)$, or $\KL (p(\vx_T \mid \vy) \Vert p(\vx_T)) = 0$. Therefore, we have:
\begin{align*}
\KL(p(\vx_0 \mid \vy) || p(\vx_0)) 
    &= -\int_0^T \frac{\diff}{\diff t} \KL (p(\vx_t \mid \vy) \Vert p(\vx_t))dt \\
    &\overset{(i)}{=}\;
    -\int_0^T 
      \E_{\vx_t \sim p(\vx_t\mid \vy)}
      \Big[
        \big(\vv_p(\vx_t\mid \vy)-\vv_p(\vx_t)\big)^{\!\top}
        \nabla_{\vx_t}\log p(\vy\mid \vx_t)
      \Big]\,dt
    \\[4pt]
    &\overset{(ii)}{=}\;
    \int_0^T 
    \Big(\sigma_t'\sigma_t-\sigma_t^2 \tfrac{a_t'}{a_t}\Big)\,
      \E_{\vx_t \sim p(\vx_t\mid \vy)}
      \big\|\nabla_{\vx_t}\log p(\vy\mid \vx_t)\big\|^2\,dt. \\
    &\approx \sum_{i=1}^N  \Delta t_i\left(\sigma_{t_i}' \sigma_{t_i} - \sigma_{t_i}^2 \frac{a_{t_i}'}{a_{t_i}}\right)\mathop{\mathbb{E}}\limits_{\vx_{t_i} \sim p(\vx_{t_i} \mid \vy)}
 \|\nabla_{\vx_{t_i}} \log p(\vy \mid \vx_{t_i})\|^2.
\end{align*}
where (i) uses Lemma \ref{lemma:kl-derivative} but with $\pi(\vx_t \mid \vy) = p(\vx_t \mid \vy)$ and (ii) uses the PF-ODE of the conditional flow  (Eq. \ref{eq:pf-ode}).
\end{proof}

\newpage

\subsection{Proofs and derivations for the DAPS-based implementation}
\label{sec:variance-bound}
\covariancelemma*
\begin{proof}\label{proof:covariance}
Our diffusion corruption process gives us:
    \begin{align*}
\Cov(\vx_t) &= a_t^2\bm{\Sigma}_0 + \sigma_t^2 \mI \\
    \Cov(\vx_0, \vx_t)&=\Cov(\vx_0, a_t\vx_0 + \sigma_t \vz_t) = a_t \Cov(\vx_0) = a_t\bm{\Sigma}_0.
\end{align*}

Plugging these into the Gaussian posterior covariance formula gives:
\begin{align*}
    \Cov(\vx_0 \mid \vx_t) &= \Cov(\vx_0) - \Cov(\vx_0, \vx_t) \Cov(\vx_t)^{-1} \Cov(\vx_0, \vx_t) \\
    &= \bm{\Sigma}_{0} - a_t \bm{\Sigma}_0 \left[ a_t^2\bm{\Sigma}_0 + \sigma_t^2 \mI \right]^{-1} a_t \bm{\Sigma}_0 \\
    &= \bm{\Sigma}_{0} - \bm{\Sigma}_0 \left[\bm{\Sigma}_0 + \frac{\sigma_t^2}{a_t^2} \mI \right]^{-1}\bm{\Sigma}_0 \\
    &= \left[\bm{\Sigma}_0^{-1} + \frac{a_t^2}{\sigma_t^2} \mI\right]^{-1}.  && \hfill \text{(Woodbury identity)}.
\end{align*}
\end{proof}

\begin{corollary}\label{corollary:eigenbound}
    Suppose the training data has $\lambda_{\text{max}} (\bm{\Sigma}_0) = 1$. Then $\|\bm{\Sigma}_{\vx_0 \mid \vx_t}  \| = \frac{\sigma_t^2}{\sigma_t^2 + a_t^2}$.
\end{corollary}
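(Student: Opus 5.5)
We start from the closed form in Lemma~\ref{lemma:covariance}, $\bm{\Sigma}_{\vx_0 \mid \vx_t} = \left[\bm{\Sigma}_0^{-1} + \frac{a_t^2}{\sigma_t^2}\mI\right]^{-1}$, and read off the spectral norm by diagonalizing. Since $\bm{\Sigma}_0$ is a (positive definite) covariance matrix, it has an orthonormal eigendecomposition $\bm{\Sigma}_0 = \mathbf{U}\,\mathrm{diag}(\lambda_1,\dots,\lambda_d)\,\mathbf{U}^T$. Adding a multiple of the identity does not change the eigenvectors, so $\bm{\Sigma}_0^{-1} + \frac{a_t^2}{\sigma_t^2}\mI$ has eigenvalues $\lambda_j^{-1} + a_t^2/\sigma_t^2$. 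Its inverse therefore has eigenvalues
\begin{align*}
    \mu_j = \frac{1}{\lambda_j^{-1} + a_t^2/\sigma_t^2} = \frac{\lambda_j \sigma_t^2}{\sigma_t^2 + a_t^2 \lambda_j}.
\end{align*}

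Next, because $\bm{\Sigma}_{\vx_0\mid\vx_t}$ is symmetric positive definite, its operator norm equals its largest eigenvalue. The map $\lambda \mapsto \frac{\lambda\sigma_t^2}{\sigma_t^2 + a_t^2\lambda}$ is increasing on $\lambda>0$, since its derivative is $\sigma_t^4/(\sigma_t^2+a_t^2\lambda)^2>0$. Hence the maximum of the $\mu_j$ is attained at $\lambda_{\max}(\bm{\Sigma}_0)$. Substituting the hypothesis $\lambda_{\max}(\bm{\Sigma}_0)=1$ gives $\|\bm{\Sigma}_{\vx_0\mid\vx_t}\| = \frac{\sigma_t^2}{\sigma_t^2+a_t^2}$, which is the claim.

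There is no real obstacle. The only points to state explicitly are that $\|\cdot\|$ denotes the spectral norm, and that $\bm{\Sigma}_0$ is assumed invertible, as Lemma~\ref{lemma:covariance} already requires. If one wants to allow singular $\bm{\Sigma}_0$, one can instead use the form $\bm{\Sigma}_0 - \bm{\Sigma}_0[\bm{\Sigma}_0 + \frac{\sigma_t^2}{a_t^2}\mI]^{-1}\bm{\Sigma}_0$ from the proof of Lemma~\ref{lemma:covariance}. Its eigenvalues $\lambda_j - \lambda_j^2/(\lambda_j + \sigma_t^2/a_t^2)$ coincide with the $\mu_j$ above, including $\mu_j = 0$ for $\lambda_j = 0$, so the same monotonicity argument applies.
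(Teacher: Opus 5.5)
Your proof is correct and follows essentially the same spectral argument as the paper, which writes $\|\bm{\Sigma}_{\vx_0 \mid \vx_t}\| = 1/\lambda_{\min}\bigl(\bm{\Sigma}_0^{-1} + \frac{a_t^2}{\sigma_t^2}\mI\bigr)$ and uses $\lambda_{\min}(\bm{\Sigma}_0^{-1}) = 1/\lambda_{\max}(\bm{\Sigma}_0) = 1$; this is exactly what your monotonicity of $\lambda \mapsto \lambda\sigma_t^2/(\sigma_t^2 + a_t^2\lambda)$ encodes. Your remark on singular $\bm{\Sigma}_0$ is a harmless extension that the paper does not address.
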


\begin{proof} Plugging in the covariance from Lemma $\ref{lemma:covariance}$ gives:
    \begin{align*}
    \|\bm{\Sigma}_{\vx_0 \mid \vx_t}  \| &= \lambda_{\text{max}}  \left[\bm{\Sigma}_0^{-1} + \frac{a_t^2}{\sigma_t^2} \mI\right]^{-1}  = \frac{1}{\lambda_{\text{min}} \left(\bm{\Sigma}_0^{-1} + \frac{a_t^2}{\sigma_t^2} \mI \right)} = \frac{1}{1 + \frac{a_t^2}{\sigma_t^2}} = \frac{ \sigma_t^2}{\sigma_t^2 + a_t^2}.
\end{align*}
\end{proof}

\likelihoodscoreestimator*

\begin{proof}\label{proof:score-estimator}
Using Tweedie's rule, we can show that $\Theta_{high}$ is unbiased for the likelihood score:
\begin{align*}
    \nabla_{\vx_t} \log p(\vy \mid \vx_t) &= \nabla_{\vx_t} \log p(\vx_t \mid \vy) - \nabla_{\vx_t} \log p(\vx_t) \\
     &= \frac{a_t}{\sigma_t^2}\!\left(\mathbb{E}[\vx_0 \mid \vx_t,\vy]-\frac{\vx_t}{a_t}\right) - \frac{a_t}{\sigma_t^2}\!\left(\mathbb{E}[\vx_0 \mid \vx_t]-\frac{\vx_t}{a_t}\right) \\
&=\frac{a_t}{\sigma_t^2}\left( \mathbb{E}[\vx_0 \mid \vx_t, \vy] - \mathbb{E}[\vx_0 \mid \vx_t] \right). \tag{$\star$}
\end{align*}


Our Gaussian approximation of $p(\vx_0 \mid \vx_t)$ gives us $ \nabla_{\vx_0} \log p(\vx_0 \mid \vx_t)  = -\bm{\Sigma}_{\vx_0 \mid \vx_t}^{-1} (\vx_0 - \mathbb{E}[\vx_0 \mid \vx_t])$. Using the fact that the expectation of the score is zero, we have:
\begin{align*}
    \mathbb{E}[\vx_0 \mid \vx_t, \vy] &=  \mathbb{E}[\vx_0 +  \bm{\Sigma}_{\vx_0 \mid \vx_t} \nabla_{\vx_0} \log p(\vx_0 \mid \vx_t, \vy)  \mid \vx_t, \vy] \\
    &=  \mathbb{E}[\vx_0 +  \bm{\Sigma}_{\vx_0 \mid \vx_t} \left( \nabla_{\vx_0} \log p(\vx_0 \mid \vx_t) + \nabla_{\vx_0} \log p(\vy \mid \vx_0) \right) \mid \vx_t, \vy] \\
    &=  \mathbb{E}[\vx_0 - (\vx_0 - \mathbb{E}[\vx_0 \mid \vx_t]) + \bm{\Sigma}_{\vx_0 \mid \vx_t} \nabla_{\vx_0} \log p(\vy \mid \vx_0)   \mid \vx_t, \vy]\\
    &= \mathbb{E}[\vx_0 \mid \vx_t] + \bm{\Sigma}_{\vx_0 \mid \vx_t}\mathbb{E}_{\vx_0 \sim p(\vx_0 \mid \vx_t, \vy)} [\nabla_{\vx_0} \log p(\vy \mid \vx_0)]. \tag{$\star\star$}
\end{align*}
Substituting $(\star\star)$ into $(\star)$  shows that $\Theta_{low}$ is unbiased.
To compute a variance bound for $\Theta_{high}$,
\begin{align*}
        \Var(\Theta_{high}) &=  \frac{a_t^2}{\sigma_t^4} \Cov(\vx_0 \mid \vx_t, \vy) \\
        &\leq \frac{a_t^2}{\sigma_t^4} \Cov(\vx_0 \mid \vx_t) && \hfill \text{(Gaussian conditional covariance)} \\
        &\leq \frac{ a_t^2}{\sigma_t^2 (\sigma_t^2 + a_t^2)}. 
        && \hfill \text{(Corollary \ref{corollary:eigenbound})}
\end{align*}


The term $\sigma_t^2 + a_t^2$ is of constant order over the diffusion process, resulting in a variance of $a_t^2 / \sigma_t^2$. To compute a bound for $\Theta_{low}$, we linearize the likelihood $f$ at a fixed $\bar{\vx}$ with $\mA = \left.\frac{\partial f}{\partial \vx}\right|_{\bar{\vx}}$ and get

\begin{align*}
\Var(\Theta_{low}) &= \frac{a_t^2}{\sigma_t^4} \bm{\Sigma}_{\vx_0 \mid \vx_t} \Cov(\nabla_{\vx_0} \log p(\vy \mid \vx_0) \mid \vx_t, \vy) \bm{\Sigma}_{\vx_0 \mid \vx_t}^T \\
    &= \frac{a_t^2}{\sigma_t^4} \bm{\Sigma}_{\vx_0 \mid \vx_t} \Cov(\frac{1}{\sigma_\vy^2} \mA^T (\vy - \mA \vx_0)  \mid \vx_t, \vy ) \bm{\Sigma}_{\vx_0 \mid \vx_t}^T \\
    &= \frac{1}{\sigma_\vy^4} \bm{\Sigma}_{\vx_0 \mid \vx_t} \mA^T \mA \left( \frac{a_t^2}{\sigma_t^4} \Cov(\vx_0  \mid \vx_t, \vy) \right) \mA^T \mA \bm{\Sigma}_{\vx_0 \mid \vx_t}^T \\
    &\leq  \frac{\|\mA\|^4}{\sigma_\vy^4} \frac{\sigma_t^4}{(\sigma_t^2 + a_t^2)^2} \Var(\Theta_{high}).   && \hfill \text{(Corollary \ref{corollary:eigenbound})} \\
    &\leq  \frac{\|\mA\|^4}{\sigma_\vy^4} \frac{ \sigma_t^2 a_t^2}{(\sigma_t^2 +  a_t^2)^3}.  
\end{align*}

Unlike $\Theta_{high}$, the variance of this estimator goes to $0$ as $\sigma_t \to 0$, but for the rest of diffusion process the constant factor makes it a higher variance estimator than $\Theta_{high}$ in practice.


\end{proof}


\newpage
\section{Generalizing \method to arbitrary posterior marginals}\label{sec:generalizing}
While we only present \method along the standard marginals in the main text, \method can be extended to any arbitrary diffusion posterior marginals that anneal to the true posterior as follows:

\begin{corollary}[Generalized \method]\label{corollary:generalized-dime}
     Let $p(\vx_t)$ be the unconditional diffusion marginal and $\pi(\vx_t \mid \vy) \propto p(\vx_t) \pi(\vy \mid \vx_t)$ be a path of marginals that anneals to the true posterior, or $\pi(\vx_0 \mid \vy) = p(\vx_0 \mid \vy)$. Denote $\vv_p(\vx_t)$ and $\vv_\pi(\vx_t \mid \vy)$ as their corresponding velocity fields. Given reverse diffusion timesteps $0=t_0<\dots<t_N=T$, we have:
\begin{align*}
\log p(\vy) &= \mathbb{E}_{\vx_0 \sim p(\vx_0 \mid \vy)}\!\big[\,\log p(\vy \mid  \vx_0)\,\big] - \KL(p(\vx_0 \mid \vy) || p(\vx_0))  
\end{align*}

where
\begin{align*}
\KL(p(\vx_0 \mid \vy) || p(\vx_0)) &\approx 
-\sum_{i=1}^N  \Delta t_i \E_{\vx_t \sim \pi(\vx_t\mid \vy)}
      \Big[
        \big(\vv_\pi(\vx_t\mid \vy)-\vv_p(\vx_t)\big)^{\!\top}
        \nabla_{\vx_t}\log \pi(\vy\mid \vx_t)
      \Big] \\
      &+ \KL(\pi(\vx_T \mid \vy) || p(\vx_T)). \label{eq:generalized-kl}
\end{align*}
\end{corollary}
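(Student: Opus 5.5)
The plan mirrors the proof of \cref{prop:daps-estimator}, with the terminal KL no longer vanishing. The first identity, $\log p(\vy) = \E_{p(\vx_0\mid\vy)}[\log p(\vy\mid\vx_0)] - \KL(p(\vx_0\mid\vy)\Vert p(\vx_0))$, is purely Bayesian and does not depend on the annealing path. We would reuse the same three-line expansion: write $\log p(\vx_0\mid\vy)-\log p(\vx_0) = \log p(\vy\mid\vx_0)-\log p(\vy)$, take the posterior expectation, and rearrange.

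For the KL term we use the annealing hypothesis $\pi(\vx_0\mid\vy)=p(\vx_0\mid\vy)$. This gives $\KL(p(\vx_0\mid\vy)\Vert p(\vx_0)) = \KL(\pi(\vx_0\mid\vy)\Vert p(\vx_0))$. We then apply the fundamental theorem of calculus to the map $t\mapsto \KL(\pi(\vx_t\mid\vy)\Vert p(\vx_t))$ on $[0,T]$:
\begin{align*}
\KL(\pi(\vx_0\mid\vy)\Vert p(\vx_0)) = \KL(\pi(\vx_T\mid\vy)\Vert p(\vx_T)) - \int_0^T \frac{\diff}{\diff t}\KL(\pi(\vx_t\mid\vy)\Vert p(\vx_t))\,dt .
\end{align*}
Unlike in \cref{prop:daps-estimator}, we do not assume that the terminal path marginal equals the prior, so the boundary term $\KL(\pi(\vx_T\mid\vy)\Vert p(\vx_T))$ is kept explicitly. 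It vanishes only in the special case of the standard marginals.

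Next we invoke \cref{lemma:kl-derivative} with the general path $\pi$ and its velocity field $\vv_\pi$. This rewrites the integrand as $\E_{\pi(\vx_t\mid\vy)}[(\vv_\pi(\vx_t\mid\vy)-\vv_p(\vx_t))^\top\nabla_{\vx_t}\log\pi(\vy\mid\vx_t)]$. No PF-ODE simplification is applied here, because for a general path $\vv_\pi-\vv_p$ need not be a multiple of the likelihood score. Finally, we discretize the time integral with a left/right Riemann sum on the grid $0=t_0<\dots<t_N=T$, evaluating the integrand at $t_i$ with weight $\Delta t_i$. This produces the stated approximation, with the sign coming from the minus in front of the integral.

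The main obstacle is justifying \cref{lemma:kl-derivative} for an arbitrary path. Its proof needs $\pi(\vx_t\mid\vy)$ to satisfy a continuity equation with some velocity $\vv_\pi$, and it needs the boundary terms in the integration by parts and in Stein's identity to vanish. We would handle this by imposing the same regularity and decay assumptions of \citet{song2021maximum} on $\pi$ as on $p$, and by assuming $t\mapsto\pi(\vx_t\mid\vy)$ is differentiable, so that some velocity field exists, for instance the one induced by $\partial_t\pi$. Beyond this, the only content is the exact identity up to quadrature error, and the Riemann-sum error is controlled by smoothness of the integrand in $t$.
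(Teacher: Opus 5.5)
Your proposal is correct and follows the paper's own proof, which simply reruns the proof of \cref{prop:daps-estimator}: the Bayesian identity for $\log p(\vy)$, then the fundamental theorem of calculus on $t\mapsto\KL(\pi(\vx_t\mid\vy)\Vert p(\vx_t))$ with the terminal KL now kept, then \cref{lemma:kl-derivative} for the general path without the PF-ODE simplification, and finally a Riemann-sum discretization. One small overstatement: the boundary term vanishes for any path with $\pi(\vx_T\mid\vy)=p(\vx_T)$, not only for the standard marginals, and even for the standard marginals it vanishes only approximately at finite $T$.
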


\begin{proof}
    Follows directly from the proof of Prop. \ref{prop:daps-estimator}.
\end{proof}

While $\vv_p$ can be directly computed using the PF-ODE (Eq. \ref{eq:pf-ode-uncond}), computing $\vv_\pi$ can be difficult for arbitrary time-marginals $\pi$. By crafting a sequence of diffusion marginals with known velocity, we can avoid any approximation like the one used for $p(\vx_0 \mid \vx_t)$ in the DAPS implementation. The marginals $\pi(\vx_t \mid \vy)$ should also ideally have high overlap with the  unconditional marginal $p(\vx_t)$ to ensure accurate computation of $\vv_p(\vx_t)$ when the score is learned.
If $\pi(\vx_T \mid \vy) \neq p(\vx_T)$, then we must use other estimation methods to compute $\KL(\pi(\vx_T \mid \vy) || p(\vx_T))$ or settle for an unnormalized model evidence estimate. The following section discusses two estimators that does not require any approximation, but are not normalized.

\subsection{Estimating the model evidence from alternative sampling methods}\label{sec:pnpdm}

\begin{algorithm}[H]
\caption{One posterior sample-path of \methodtilted}\label{alg:pnpdm}
\begin{algorithmic}[1]
\REQUIRE Score-based model $s_\theta$, measurement $\vy$, noise schedule $\sigma_t$, $(t_i)_{i \in \{1, \ldots, N\}}$, 
\STATE Sample $\hat{\vx}_0 \sim p(\vx_0)$.
\STATE {\color{blue} Initialize the integral sum, $f = 0$.}
\FOR{$i = N, N-1 \ldots, 0$}
    \STATE Sample $\vx_{t_{i}} \sim  q(\vx_{t_{i}} \mid \vy, \hat{\vx}_0) \propto q(\vy \mid \vx_{t_{i}}) p(\vx_{t_{i}} \mid \hat{\vx}_0)$. 
    \STATE Sample $\hat{\vx}_0 \sim p(\vx_0 \mid \vx_{t_{i}})$ using $s_\theta$.
    \STATE {\color{blue} Update the integral sum $f \gets f - (t_i - t_{i-1}) \vv_p(\vx_{t_{i}})^T \nabla_{\vx_t} \log q(\vy \mid \vx_{t_{i}})$}
\ENDFOR
\RETURN $x^{(0)}$, {\color{blue}$f$}
\end{algorithmic}
\end{algorithm}

The PnP-DM marginals follow $q(\vx_t \mid \vy)\propto q(\vy \mid \vx_t) p(\vx_t)$ where $q(\vy \mid \vx_t)$ is a time-independent likelihood term. As a result, the initial distribution $q(\vx_T \mid \vy) \neq p(\vx_T)$ and the KL divergence $\KL(q(\vx_T \mid \vy) \Vert p(\vx_T))$ is non-zero, so we can only compute the unnormalized model evidence. 
However, the unknown constant is dependent on $\vy$ and the forward model, so this unnormalized quantity can still be used for model selection. 
We derive the estimator in Proposition \ref{prop:pnpdm-estimator} and implement it with PnP-DM in Algorithm \ref{alg:pnpdm}.

\begin{restatable}[\methodtilted: model evidence estimation along the PnP-DM marginals]{proposition}{mysecondprop}\label{prop:pnpdm-estimator}
    Given diffusion process $\vx_t = a_t \vx_0 + \sigma_t \vz_t, \vz_t \sim \mathcal{N}(0, \mI)$, PnP-DM marginals $q(\vx_t \mid \vy) \propto p(\vx_t)  q(\vy \mid \vx_t)$, and timesteps $0 = t_0 < \dots < t_N = T$, the model evidence can be estimated via:
    \begin{align}
        \log p(\vy) \approx C(q, \vy) - \sum_{i=1}^N \Delta t_i \mathbb{E}_{\vx \sim q(\vx_{t_i} \mid \vy)} \left[\vv_p(\vx_{t_i})^T \nabla_{\vx_{t_i}} \log q(\vy \mid \vx_{t_i})  \right]
    \end{align}
    where $C(q, \vy) = \log \mathbb{E}_{\vx \sim \mathcal{N}(0,\mI)} \left[q(\vy \mid \vx)\right]$ is constant for fixed measurement $\vy$ and forward model, and $\vv_p(\vx_t) = \frac{a_t'}{a_t} \vx_t  - (\sigma_t' \sigma_t - \sigma_t^2 \frac{a_t'}{a_t}) \nabla_\vx \log p(\vx_t)$ is the PF-ODE velocity at $\vx_t$ (Eq. \ref{eq:pf-ode-uncond}).
\end{restatable}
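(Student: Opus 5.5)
The plan is to run the evidence decomposition of Prop.~\ref{prop:daps-estimator} along the PnP-DM path, invoking the generalized estimator (Cor.~\ref{corollary:generalized-dime}) with $\pi(\vx_t \mid \vy) = q(\vx_t \mid \vy)$ and $\pi(\vy \mid \vx_t) = q(\vy \mid \vx_t)$. Since $q(\vy \mid \vx_0) = p(\vy \mid \vx_0)$, this path anneals to the true posterior at $t=0$, so the corollary applies directly. It gives
\begin{align*}
\log p(\vy) = \E_{p(\vx_0\mid\vy)}[\log p(\vy\mid\vx_0)] + \int_0^T \E_{q(\vx_t\mid\vy)}\big[(\vv_q-\vv_p)^\top \nabla \log q(\vy\mid\vx_t)\big]dt - \KL(q(\vx_T\mid\vy)\Vert p(\vx_T)).
\end{align*}

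The first step is to remove the unknown velocity $\vv_q$. Rather than computing it, I would note that along the $q$-path the likelihood is time-independent as a function of $\vx$. This lets me repeat the bookkeeping of Lemma~\ref{lemma:kl-derivative} on the two pieces of the KL separately. Write $\KL(q_t\Vert p_t) = \E_{q_t}[\log q(\vy\mid\vx_t)] - \log Z_t$ with $Z_t = \E_{p_t}[q(\vy\mid\vx_t)]$.

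Then compute the two derivatives:
\begin{itemize}
\item By the continuity equation and integration by parts, $\frac{d}{dt}\E_{q_t}[\log q(\vy\mid\vx)] = \E_{q_t}[\vv_q^\top\nabla\log q(\vy\mid\vx)]$, which uses that $\log q(\vy\mid\vx)$ has no explicit $t$.
\item Likewise, $\frac{d}{dt}\log Z_t = \frac{1}{Z_t}\int p_t\,\vv_p^\top\nabla q\,d\vx = \E_{q_t}[\vv_p^\top\nabla\log q(\vy\mid\vx)]$.
\end{itemize}
So the $\vv_q$ term is exactly $\frac{d}{dt}\E_{q_t}[\log q]$. Integrating it from $0$ to $T$ cancels the $\E_{p(\vx_0\mid\vy)}[\log p(\vy\mid\vx_0)]$ term against the $\E_{q_T}[\log q]$ part of $\KL(q_T\Vert p_T)$. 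What remains is
\begin{align*}
\log p(\vy) = \log Z_T - \int_0^T \E_{q(\vx_t\mid\vy)}[\vv_p^\top \nabla\log q(\vy\mid\vx_t)]\,dt.
\end{align*}
Equivalently, this is $\log Z_0 = \log Z_T - \int \frac{d}{dt}\log Z_t\,dt$ with $Z_0 = p(\vy)$, which I would present as the cleanest route.

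The final step is to identify $Z_T = \E_{p(\vx_T)}[q(\vy\mid\vx_T)]$ with $\E_{\mathcal{N}(0,\mI)}[q(\vy\mid\vx)]$. This uses that $p(\vx_T)\approx\mathcal{N}(0,\mI)$ at terminal time (with $\sigma_T\approx1$, $a_T\approx0$), which gives the constant $C(q,\vy)$ independent of the prior model. The integral is then discretized by a Riemann sum on $t_1,\dots,t_N$ as in Prop.~\ref{prop:daps-estimator}.

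I expect the main obstacle to be justifying the exchange of derivative and integral, and the vanishing boundary terms in the integration by parts. For this I would invoke the regularity assumptions cited from \cite{song2021maximum}, together with integrability of $q(\vy\mid\vx)$ and its gradient under $p_t$, which holds for Gaussian-type likelihoods. The approximation $p_T\approx\mathcal{N}(0,\mI)$ is the only source of modeling error in $C$, besides the discretization.
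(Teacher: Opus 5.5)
Your proposal is correct and is essentially the paper's proof. The paper likewise remarks that Corollary~\ref{corollary:generalized-dime} could be applied, but instead differentiates the normalizer $\log q(\vy \mid t) = \log \int p(\vx_t)\, q(\vy \mid \vx_t)\, d\vx_t$ directly, using the time-independence of $q(\vy\mid\vx_t)$, the continuity equation for $p_t$ under $\vv_p$, and Stein's identity (equivalent to your single integration by parts against $\nabla q$), and then integrates from $Z_0 = p(\vy)$ to $Z_T$, approximating $Z_T$ by a Gaussian-prior evidence as you do (its proof writes $\mathcal{N}(0,\sigma_T^2\mI)$, which matches the statement's $\mathcal{N}(0,\mI)$ when $a_T \approx 0$, $\sigma_T \approx 1$).
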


\begin{proof}
While we can apply Corollary \ref{corollary:generalized-dime}, we can take a shortcut to cancel some extra terms. Under the PnP-DM marginals, $\log q(\vy \mid t)$ is time-dependent while the likelihood $q(\vy \mid \vx_t)$ is time-independent, so we can directly compute the derivative of the evidence:
\begin{align*}
    \frac{d}{dt} \log q(\vy \mid t) 
      &=\mathbb{E}_{\vx \sim q(\vx_t \mid \vy)} \left[\frac{d}{dt} \log p(\vx_t) \right] && \hfill \text{(Fisher's identity)}\\
    &=-\mathbb{E}_{\vx \sim q(\vx_t \mid \vy)} \left[\nabla\cdot \vv_p(\vx_t) + \vv_p(\vx_t)^T \nabla_{\vx_t}    \log p(\vx_t) \right]   && \hfill \text{(Log-cont. equation)}\\
    &=-\mathbb{E}_{\vx \sim q(\vx_t \mid \vy)} \left[\nabla\cdot \vv_p(\vx_t) + \vv_p(\vx_t)^T \nabla_{\vx_t} \log q(\vx_t \mid \vy)\right] \\
    &\quad +\mathbb{E}_{\vx \sim q(\vx_t \mid \vy)} \left[ \vv_p(\vx_t)^T \nabla_{\vx_t} \log q(\vy \mid \vx_t)  \right]  && \hfill \text{(Bayes)}\\
     &=\mathbb{E}_{\vx \sim q(\vx_t \mid \vy)} \left[\vv_p(\vx_t)^T \nabla_{\vx_t} \log q(\vy \mid \vx_t)  \right].  && \hfill \text{(Stein's identity)}
     \end{align*}
 
 Integrating over all $t$ gives us the estimator:
\begin{align*}
     \log p(\vy) &= \log q(\vy \mid t=0) \\
     &= \log q(\vy \mid t=T) -\int_0^T  \frac{d}{dt} \log q(\vy \mid t) dt \\
     &= \log q(\vy \mid t=T)  - \sum_{i=1}^N \Delta t_i \mathbb{E}_{\vx \sim q(\vx_{t_i} \mid \vy)} \left[\vv_p(\vx_{t_i})^T \nabla_{\vx_{t_i}} \log q(\vy \mid \vx_{t_i})  \right]
\end{align*}
where $\log q(\vy \mid t=T) =\log \mathbb{E}_{\vx \sim \mathcal{N}(0,\sigma_T^2)} \left[q(\vy \mid \vx)\right]$ is the model evidence under a Gaussian prior, and $\vv_p(\vx_t) = \frac{a_t'}{a_t} \vx_t  - (\sigma_t' \sigma_t - \sigma_t^2 \frac{a_t'}{a_t}) \nabla_\vx \log p(\vx_t)$ is the PF-ODE from Equation \ref{eq:pf-ode-uncond}.

\end{proof}

We can also do a similar derivation for the marginals used for the Twisted Diffusion Sampler \citep{wu2023practical}, which follow $\pi(\vx_t | \vy) \propto p(\vx_t) p(\vy | \mathbb{E}[\vx_0 | \vx_t])$; note that this evidence gradient can be computed before resampling using SMC weights or after resampling without weights.

\begin{restatable}[\methodtds: model evidence estimation along the TDS marginals]{proposition}{mythirdprop}\label{prop:tds-estimator}
    Given diffusion process $\vx_t = a_t \vx_0 + \sigma_t \vz_t, \vz_t \sim \mathcal{N}(0, \mI)$, TDS marginals $\pi(\vx_t | \vy) \propto p(\vx_t) p(\vy | \vmu(\vx_t))$, and timesteps $0 = t_0 < \dots < t_N = T$, the model evidence can be estimated via:
    \begin{align}
        \log p(\vy) \approx C(q, \vy)  - \sum_{i=1}^N \Delta t_i \mathbb{E}_{\vx_t \sim \pi(\vx_{t_i} \mid \vy)}\left[\vv_p(\vx_{t_i})^T \nabla_{\vx_{t_i}} \log p(\vy \mid \vmu(\vx_{t_i})) + \frac{d}{dt} \log p(\vy \mid \vmu(\vx_{t_i}))\right]
    \end{align}
    where $C(q, \vy) =\log \mathbb{E}_{\vx \sim \mathcal{N}(0,\sigma_T^2)} \left[p(\vy \mid \mu(\vx_T))\right]$ is constant for fixed measurement $\vy$ and forward model, and $\vv_p(\vx_t) = \frac{a_t'}{a_t} \vx_t  - (\sigma_t' \sigma_t - \sigma_t^2 \frac{a_t'}{a_t}) \nabla_\vx \log p(\vx_t)$ is the PF-ODE velocity at $\vx_t$ (Eq. \ref{eq:pf-ode-uncond}).
\end{restatable}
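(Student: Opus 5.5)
We mirror the proof of Proposition~\ref{prop:pnpdm-estimator}, with one change: the likelihood factor $\ell_t(\vx_t) := p(\vy \mid \vmu(\vx_t))$ now depends on $t$ explicitly, through the time-dependent denoiser $\vmu(\vx_t) = \mathbb{E}[\vx_0 \mid \vx_t]$. Define the unnormalized evidence
\begin{align*}
Z(t) := \int p(\vx_t)\, \ell_t(\vx_t)\, d\vx_t ,
\end{align*}
so that $\pi(\vx_t \mid \vy) = p(\vx_t)\ell_t(\vx_t)/Z(t)$. At $t = 0$ we have $\vmu(\vx_0) = \vx_0$, hence $Z(0) = \int p(\vx_0) p(\vy\mid\vx_0)\,d\vx_0 = p(\vy)$. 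At $t = T$ we have $p(\vx_T) \approx \mathcal{N}(0,\sigma_T^2\mI)$, hence $Z(T) \approx C(q,\vy)$. The identity $\log p(\vy) = \log Z(T) - \int_0^T \frac{d}{dt}\log Z(t)\,dt$ then reduces the claim to computing $\frac{d}{dt}\log Z(t)$.

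\textbf{Differentiating the evidence.} Differentiate under the integral, using the regularity assumptions of \cite{song2021maximum} imposed for Lemma~\ref{lemma:kl-derivative}:
\begin{align*}
\frac{d}{dt}\log Z(t) = \mathbb{E}_{\pi(\vx_t\mid\vy)}\Big[\partial_t \log p(\vx_t) + \partial_t \log \ell_t(\vx_t)\Big].
\end{align*}
The second term is new relative to the PnP-DM case. It is the explicit partial time derivative of $\log p(\vy\mid\vmu(\vx_t))$ at fixed $\vx_t$, which is what the statement writes as $\frac{d}{dt}\log p(\vy\mid\vmu(\vx_{t_i}))$; it stays inside the expectation unchanged.

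For the first term, repeat the steps of Proposition~\ref{prop:pnpdm-estimator}:
\begin{enumerate}
\item Apply the log-continuity equation: $\partial_t \log p = -\nabla\cdot\vv_p - \vv_p^T\nabla\log p$.
\item Use Bayes to rewrite $\nabla\log p = \nabla\log\pi - \nabla\log\ell_t$.
\item Apply Stein's identity, $\mathbb{E}_\pi[\nabla\cdot\vv_p + \vv_p^T\nabla\log\pi] = 0$, which kills the divergence and posterior-score terms.
\end{enumerate}
What remains is $\mathbb{E}_\pi[\vv_p(\vx_t)^T\nabla_{\vx_t}\log p(\vy\mid\vmu(\vx_t))]$.

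\textbf{Assembling the estimator.} Summing the two contributions gives the integrand of the statement. Integrating over $t\in[0,T]$ and discretizing with a Riemann sum on the grid $t_0<\dots<t_N$ yields the estimator. Sign conventions must track the orientation of $t$ exactly as in the PnP-DM proof, since the statement's sign was chosen to match that proof.

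\textbf{Main obstacle.} The delicate step is justifying Stein's identity together with the interchange of derivative and integral. Both need $\pi(\vx_t\mid\vy)\vv_p$ to decay at infinity and $\ell_t$ to be differentiable in $(t,\vx_t)$, which holds if the denoiser $\vmu$ is smooth in $t$. A second gap is the boundary value $Z(T)$. It equals $C(q,\vy)$ only to the extent that $p(\vx_T)$ matches the Gaussian reference and $\vmu(\vx_T)$ is well defined as $a_T\to 0$. I would treat this exactly as in Proposition~\ref{prop:pnpdm-estimator}: absorb it into the $\vy$-dependent constant and note that the estimator is therefore only unnormalized.
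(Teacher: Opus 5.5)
Your proposal is correct and follows essentially the same route as the paper: differentiate the normalizer $\int p(\vx_t)\,p(\vy\mid\vmu(\vx_t))\,d\vx_t$ under the integral, and keep the explicit time derivative of $\log p(\vy\mid\vmu(\vx_t))$ inside the expectation. Then reduce $\mathbb{E}_\pi[\partial_t \log p(\vx_t)]$ to $\mathbb{E}_\pi[\vv_p(\vx_t)^T\nabla_{\vx_t}\log p(\vy\mid\vmu(\vx_t))]$ via the log-continuity equation, Bayes, and Stein's identity, and finally integrate from $0$ to $T$ and discretize. Two small points: it is $\log Z(T)$, not $Z(T)$, that approximates $C(q,\vy)$, and the minus sign in front of the sum follows directly from $\log p(\vy)=\log Z(T)-\int_0^T \frac{d}{dt}\log Z(t)\,dt$, so the sign needs no separate bookkeeping.
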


\begin{proof}

 \begin{align*}
 \frac{d}{dt} \log \pi(\vy \mid t) &= \frac{1}{\pi(\vy \mid t)} \frac{d}{dt} \int  p(\vx_t) p(\vy \mid \vmu(\vx_t)) d\vx_t \\
&=  \frac{1}{\pi(\vy \mid t)} \int \left[\frac{d}{dt} p(\vx_t)\right] p(\vy \mid \vmu(\vx_t)) d\vx_t  + \frac{1}{\pi(\vy \mid t)} \int  p(\vx_t) \left[\frac{d}{dt} p(\vy \mid \vmu(\vx_t)) \right] d\vx_t \\
&= \mathbb{E}_{\vx_t \sim \pi(\vx_t \mid y)}\left[\frac{d}{dt} \log p(\vx_t) + \frac{d}{dt} \log p(\vy \mid \vmu(\vx_t)) \right] \\
&=  \mathbb{E}_{\vx_t \sim \pi(\vx_t \mid \vy)}\left[\vv_p(\vx_t)^T \nabla_{\vx_t} \log p(\vy \mid \vmu(\vx_t)) + \frac{d}{dt} \log p(\vy \mid \vmu(\vx_t)) \right]
\end{align*}

where the last line is derived in a similarly to Proposition \ref{prop:pnpdm-estimator}. Integrating over all $t$ gives the estimator:
\begin{align*}
    \log p(\vy) &= \log \pi(\vy \mid t=T) -\int_0^T  \frac{d}{dt} \log \pi(\vy \mid t) dt \\
    &\approx \log \pi(\vy \mid t=T)  - \sum_{i=1}^N \Delta t_i \mathbb{E}_{\vx_t \sim \pi(\vx_{t_i} \mid \vy)}\left[\vv_p(\vx_{t_i})^T \nabla_{\vx_{t_i}} \log p(\vy \mid \vmu(\vx_{t_i})) + \frac{d}{dt} \log p(\vy \mid \vmu(\vx_{t_i}))\right]
\end{align*}
where $\log \pi(\vy \mid T) \approx \log \mathbb{E}_{\vx \sim \mathcal{N}(0,\sigma_T^2)} \left[p(\vy \mid \mu(\vx_T))\right]$.
\end{proof}

\newpage
\section{Baseline methods}\label{sec:baseline-information}
In Section \ref{sec:mog-gaussian}, we compared to five baseline methods, described here. Define $p_\beta$  as an intermediate distribution of the \textit{power-posterior path}, used in TI, AIS, and SMC: $p_\beta(\vx) \propto p(\vx) p(\vy \mid \vx)^\beta, \beta\in[0,1]$. Note that this path is equivalent to the \textit{geometric path} 
under a different parametrization.

\textbf{Naive Monte Carlo (Naive MC)}: We generate many unconditional prior samples and compute the expectation of the likelihood as follows: $\log p(\vy) = \log \mathbb{E}_{\vx \sim p(\vx_0)} [p(\vy \mid \vx)]$.

\textbf{Original DAPS covariance heuristic (Original heuristic)}: We perform the same steps as described in Algorithm \ref{alg:daps} but using the originally proposed covariance heuristic $\bm{\Sigma}_{\vx_0 \mid \vx_t} = \sigma_t^2$.


\textbf{Thermodynamic integration (TI)} \citep{lartillot2006computing}: Originally from statistical mechanics, TI can also compute expectations by integrating the evidence over the power-posterior path:
\begin{align*}
    \frac{d}{d\beta} \log p_\beta(\vy) &= \frac{1}{p_\beta (\vy)}\frac{d }{d\beta}p_\beta (\vy) = \frac{1}{p_\beta (\vy)} \int p(\vx) \frac{d }{d\beta} p(\vy \mid \vx)^\beta d\vx  \\
    &= \frac{1}{p_\beta (\vy)} \int p(\vx) p(\vy \mid \vx)^\beta \log p(\vy \mid \vx) d\vx = \mathbb{E}_{\vx \sim p_\beta} [\log p(\vy \mid \vx)].
    \end{align*}

Using the fact that $\log p_{\beta=0} (\vy) = \log 1 = 0$, using $K$ integration steps we have the estimator:
\begin{align*}
    \log p_{\beta=1}(\vy) \;\approx\; \sum_{k=1}^{K} \Delta\beta_k\,
\mathbb{E}_{\vx \sim p_{\beta_{k-1}}}\!\left[\log p(\vy \mid \vx)\right].
\end{align*}
Samples from each $p_\beta$ were generated using Sequential Monte Carlo (details below).
    
\textbf{Annealed importance sampling (AIS)} \citep{neal2001annealed}: We can compute the ratio of evidence:
\begin{align*}
    \frac{p_{\beta_k}(\vy)}{p_{\beta_{k-1}}(\vy)} &= \frac{\int p(\vx) p(\vy \mid \vx)^{\beta_{k-1}} p(\vy \mid \vx)^{\Delta \beta_k} d\vx}{p_{\beta_{k-1}}(\vy)} = \mathbb{E}_{\vx \sim p_{\beta_{k-1}}} [ p(\vy \mid \vx)^{\Delta \beta_k} ] \\
    &=  \mathbb{E}_{p_{\vx \sim \beta_{k-1}}} [ \exp(\Delta \beta_k \log p(\vy \mid \vx)) ].
\end{align*}
Using the fact that $\log p_{\beta=0} (\vy) = \log 1 = 0$, telescoping over all $\beta_k$ gives:
\begin{align*}
p_{\beta=1}(\vy) &= \prod_{k=1}^K  \mathbb{E}_{p_{\beta_{k-1}}} [ \exp(\Delta \beta_k \log p(\vy \mid \vx)) ]
\\ &=\,\mathbb{E}_{\vx_{0:k-1}\sim p_{\beta_{0:k-1}}}\exp\!\Big(\sum_{k=1}^{K}\Delta \beta_k \,\log p(\vy\mid \vx_{k-1})\Big).
\end{align*}

Note that $\vx_{0:k-1}\sim p_{\beta_{0:k-1}}$ represents the sample-path of one particle, where each transition kernel from $k-1$ to $k$ was performed using Langevin dynamics. 

\textbf{Sequential Monte Carlo (SMC)} \citep{del2006sequential}: An extension of AIS that uses the resampling step from Sequential Monte Carlo methods to replace particles with degenerate weights (i.e. particles stuck in the wrong mode) while sampling the next intermediate distribution, potentially reducing both bias and variance. These resampling steps are performed either using a fixed schedule or adaptively when the effective sample size falls below a threshold.

\newpage
\section{Implementation details} \label{sec:tuning-details}
\subsection{\method}  

As \method is implemented alongside DAPS, we have a similar set of hyperparameters.


\textbf{Number of diffusion steps for sampling $p(\vx_0 \mid \vx_t)$}: 
 While the original DAPS method uses many diffusion steps to estimate the denoised image at each annealing iteration, we use 1 diffusion step to ensure an exact estimate of $\mathbb{E}[\vx_0 \mid \vx_t]$ by Tweedie's formula.

\textbf{Langevin dynamics}: The learning rate of Langevin dynamics was tuned so that posterior samples $\tilde{\vx}_0 \sim p(\vx_0 \mid \vx_t, \vy)$ have a mean reduced $\chi^2$ data likelihood fit that converges as low as possible (ideally around 1, which is indicative of a good posterior sample for likelihoods with Gaussian noise). We also use a linear learning rate decay as is done in \citet{zhang2025improving}. The table below lists the learning rates and maximum step counts used for each experiment. In practice, sample convergence usually happens much sooner than the maximum step count.

\begin{table}[H]
\centering
\begin{tabular}{@{}lcc@{}}
\toprule
Task & \texttt{lr} & Max steps \\
\midrule
Mixture of Gaussians (in-distribution)        & \(5\times10^{-4}\) & \(2{,}000\)  \\
Mixture of Gaussians (out-of-distribution)        & \(2\times10^{-5}\) & \(10{,}000\)  \\
Mixture of Gaussians (saddle point)         & \(2\times10^{-5}\) & \(10{,}000\)  \\
Fourier phase retrieval      & \(1 \times10^{-3}\)        & \(1{,}000\) \\
Gaussian phase retrieval     & \(5\times10^{-4}\) & \(1{,}000\) \\
M87* black hole imaging      & \(1 \times 10^{-5}\)        & \(1{,}000\) \\
\bottomrule
\end{tabular}
\end{table}

\textbf{Selection of $\sigma_{min}$}: This parameter can be viewed as the minimum noise level that the learned score can still be trusted. For the analytic Mixture of Gaussians experiment, we use a minimum annealing noise level $\sigma_{min} = 0.05$, and for the diffusion model experiments, we use $\sigma_{min} = 0.1$ as is done in \citet{zhang2025improving}. 
The remaining integral from $t=0$ to $t= t_{min}$ in Eq.~\ref{eq:std-evidence-final} is approximated by a trapezoid, using the fact that the integrand equals $0$ at $t=0$. 

\textbf{Computing the precision matrix $\bm{\Sigma}_0^{-1}$}: for some datasets, the covariance of specific pixel locations is nearly 0 (such as the corner regions of MNIST), so we added a jitter of 1e-2 for stability.

\subsection{Baselines} 
We use TI, AIS, and SMC as baseline methods for the mixture of Gaussians experiment (Section \ref{sec:mog-gaussian}) and SMC for the MNIST experiment (Section \ref{sec:phase-retrieval}; Appendix \ref{sec:baseline-results-appendix}). We used Langevin dynamics with 2000 steps and learning rate 0.0002 for our transitions between distributions. For SMC, we resample when the effective sample size (ESS) is less than 0.6. 

We test with fixed linear, exponential, sigmoidal $\beta$ as well as adaptive $\beta$ schedules that decrease the Conditional ESS at a constant rate of [0.5, 0.6, 0.7]. For the mixture of Gaussians study, an exponential schedule with $\lambda = 6$ was found to be optimal.   For the MNIST experiment, the fixed schedules use 50 annealing steps, while the adaptive schedule had a minimum $\beta$ step size of $0.002$ which in practice results in $150$ steps at most.


\newpage
\section{Baseline results on MNIST Model Selection}\label{sec:baseline-results-appendix}
\begin{figure}[H]
    \centering
    \includegraphics[width=0.85\linewidth]{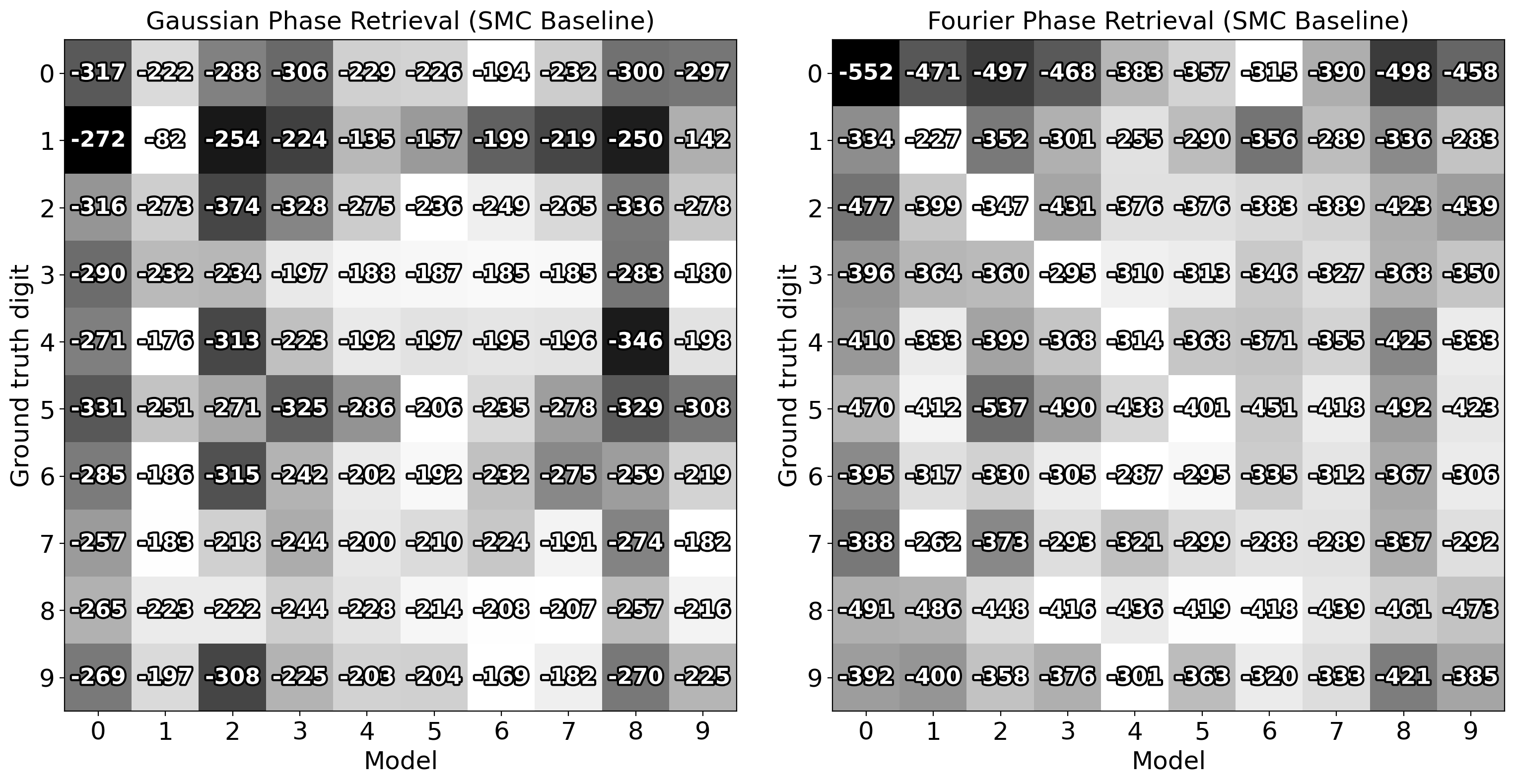}
        \caption{Model evidence confusion matrix using sequential Monte Carlo (SMC) for Gaussian phase retrieval (\textbf{left}) and Fourier phase retrieval (\textbf{right}) for each (ground truth measurement, model) pair of MNIST digits. When SMC uses the learned diffusion score $\nabla_{\vx} \log p(\vx_{t})$ at $t=0.1$ as the prior score, it often fails to select the correct model.}
        \label{fig:phase-retrieval-baseline}
\end{figure}

Results of the MNIST experiment (Section \ref{sec:phase-retrieval}) using SMC and the learned $\nabla_{\vx} \log p(\vx_{t})$ with $\sigma=0.1$ as a prior score are shown in Figure \ref{fig:phase-retrieval-baseline}.  SMC is often unable to determine the correct MNIST model.
Each SMC annealing step also takes about $10\times$ longer than an equivalent annealing step for $\method$, due to having to query the diffusion model many times for the prior score. Interestingly, narrower priors (i.e. MNIST digit 1) leads to \textit{higher} model evidence estimates here; since the training data is more clumped together, the learned score becomes more accurate at low noise levels.

For completeness, the model evidence estimates of a few runs over all temperature annealing schedules as well as for a second noise level, $\sigma=0.05$, are reported in Tables \ref{table:smc-gaussian} and  \ref{table:smc-fourier}. Mean and standard deviation over 5 runs is shown here. As can be seen, lowering the noise level only further increases overfitting to training data, causing the estimated evidence to drop.

\begin{table}[h!]
\centering
{\tiny 
\setlength{\tabcolsep}{6pt} 
\renewcommand{\arraystretch}{1.2} 
\begin{tabular}{|l|ccc|ccc|}
\hline
 & \multicolumn{3}{c|}{$\sigma_{\min}=0.1$} & \multicolumn{3}{c|}{$\sigma_{\min}=0.05$} \\
\cline{2-7}
\textbf{SMC schedule}
& \textbf{GT 8 Model 6} & \textbf{GT 8 Model 7} & \textbf{GT 8 Model 8}
& \textbf{GT 8 Model 6} & \textbf{GT 8 Model 7} & \textbf{GT 8 Model 8} \\
\hline
Linear              & $-182 \pm 9$  & $-212 \pm 9$  & $-210 \pm 11$ & $-279 \pm 26$ & $-352 \pm 3$ & $-426 \pm 40$ \\
\hline
Exponential         & $-183 \pm 7$  & $-224 \pm 5$  & $-239 \pm 6$  & $-292 \pm 20$ & $-383 \pm 9$ & $-468 \pm 12$ \\
\hline
Sigmoidal           & $-175 \pm 5$  & $-219 \pm 13$ & $-191 \pm 2$  & $-273 \pm  15$ & $-374 \pm 9$ & $-480 \pm 17$ \\
\hline
Adaptive 0.5       & $-195 \pm 31$  & $-202 \pm 23$  & $-261 \pm 1$  & $-296 \pm 52$ & $-346 \pm 16$& $-489 \pm 33$ \\
\hline
Adaptive 0.6       & $-184 \pm 6$  & $-189 \pm 7$  & $-259 \pm 4$  & $-279 \pm 45$ & $-373 \pm 17$& $-485 \pm 26$ \\
\hline
Adaptive 0.7       & $-192 \pm 14$ & $-192 \pm 2$  & $-255 \pm 11$ & $-296 \pm 26$ & $-384 \pm 18$ & $-522 \pm 30$\\
\hline
\end{tabular}
} 
\caption{SMC model evidence estimates on Gaussian phase retrieval.}
\label{table:smc-gaussian}
\end{table}

\begin{table}[h!]
\centering
{\tiny 
\setlength{\tabcolsep}{6pt} 
\renewcommand{\arraystretch}{1.2} 
\begin{tabular}{|l|ccc|ccc|}
\hline
 & \multicolumn{3}{c|}{$\sigma_{\min}=0.1$} & \multicolumn{3}{c|}{$\sigma_{\min}=0.05$} \\
\cline{2-7}
\textbf{SMC schedule}
& \textbf{GT 8 Model 6} & \textbf{GT 8 Model 7} & \textbf{GT 8 Model 8}
& \textbf{GT 8 Model 6} & \textbf{GT 8 Model 7} & \textbf{GT 8 Model 8} \\
\hline
Linear              & $-416 \pm 11$&$-431 \pm 4$& $-409 \pm 1$ & $-485 \pm 25$& $-577 \pm 2$ & $-563 \pm 2$\\
\hline
Exponential         & $-422 \pm 20$& $-427 \pm 2$ & $-467 \pm 11$ &  $ -527 \pm 10$ & $-582 \pm 12$ & $-585 \pm 9$ \\
\hline
Sigmoidal           & $-415 \pm 9$ & $-451 \pm 3$ &$-411 \pm 7$ & $ -520 \pm 13$ & $ -580 \pm 7$ & $-575 \pm 5$ \\
\hline
Adaptive 0.5       & $-409 \pm 4$ & $-409 \pm 6$ & $-460 \pm 8$ & $-524 \pm 33$ & $-563 \pm 18$ & $-567 \pm 10$ \\
\hline
Adaptive 0.6       & $-431 \pm 20$ & $-437 \pm 11$ & $-458 \pm 8$ & $-510 \pm 9$ & $-566 \pm 14$ & $-562 \pm 13$ \\
\hline
Adaptive 0.7       & $-442 \pm 9$ & $-437 \pm 8$ & $-453 \pm 1$ & $-516 \pm 6$ & $-584 \pm 3$ & $-570 \pm 6$ \\
\hline
\end{tabular}
} 
\caption{SMC model evidence estimates on Fourier phase retrieval.}
\label{table:smc-fourier}
\end{table}

\newpage

\section{Synthetic Model Selection}
\label{sec:syntheticmodelselection}
In this section, we perform model selection on the same set of five priors as in Section \ref{sec:m87-model-selection}, but on simulated closure quantities measurements of test images from each prior as verification. The correct model is chosen for all measurements. The posterior samples with the lowest, mean, and highest path evidences are also shown in each case.

\begin{figure}[H]
    \centering
    \includegraphics[width=0.85\linewidth]{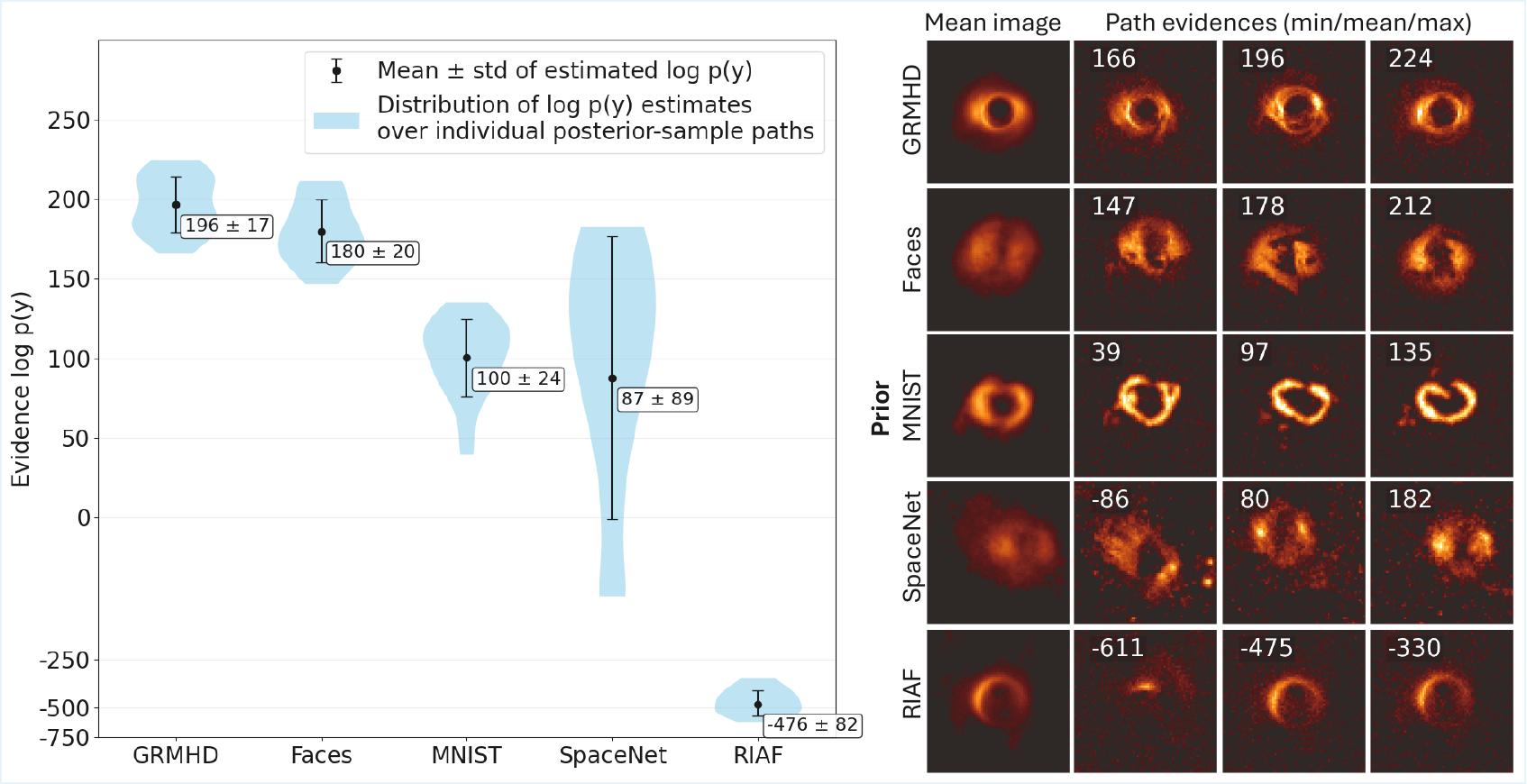}
    \caption{Ground truth: GRMHD.}
    \label{fig:syn-grmhd}
\end{figure}

\begin{figure}[H]
    \centering
    \includegraphics[width=0.85\linewidth]{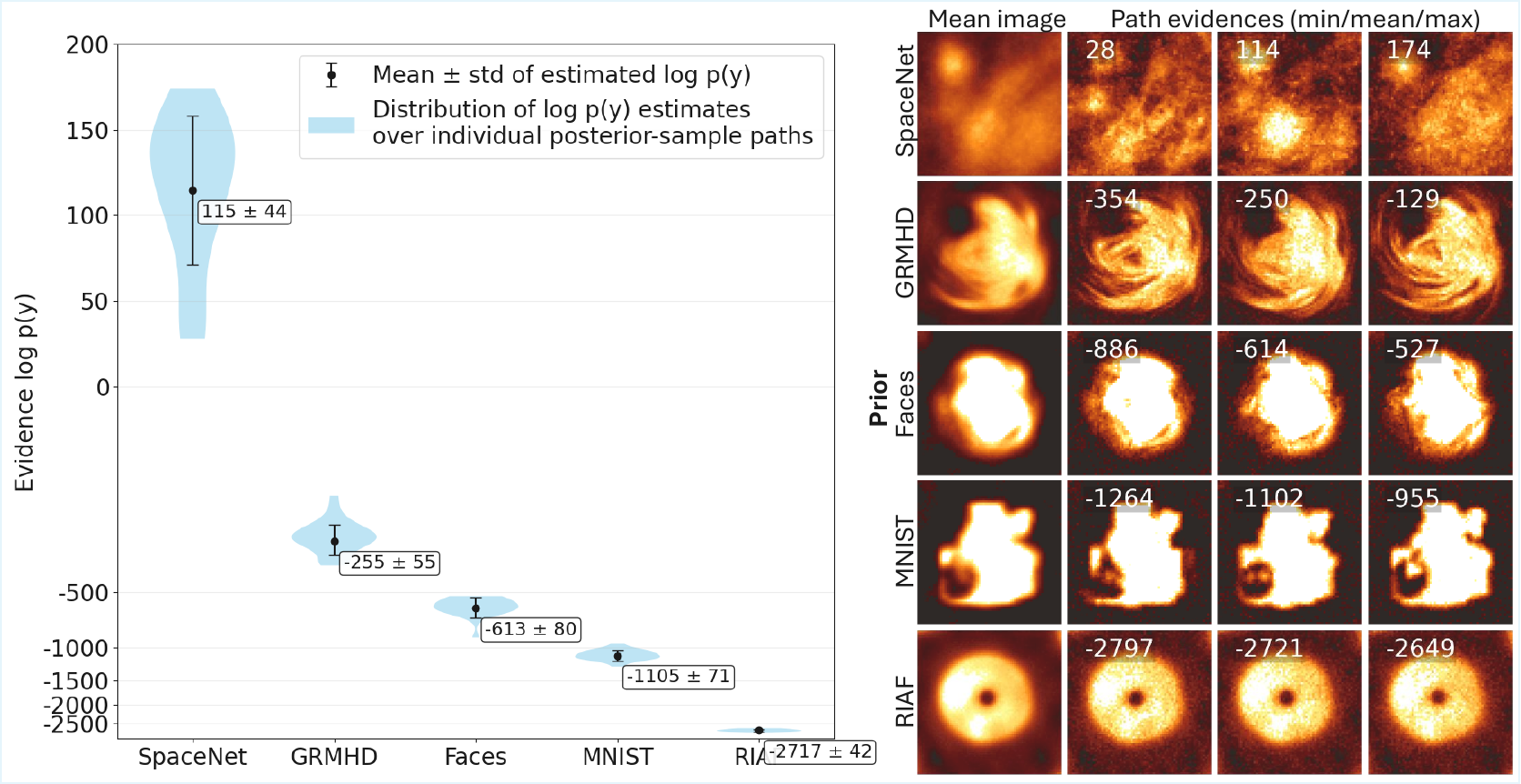}
    \caption{Ground truth: SpaceNet.}
    \label{fig:syn-space}
\end{figure}

\begin{figure}[H]
    \centering
    \includegraphics[width=0.85\linewidth]{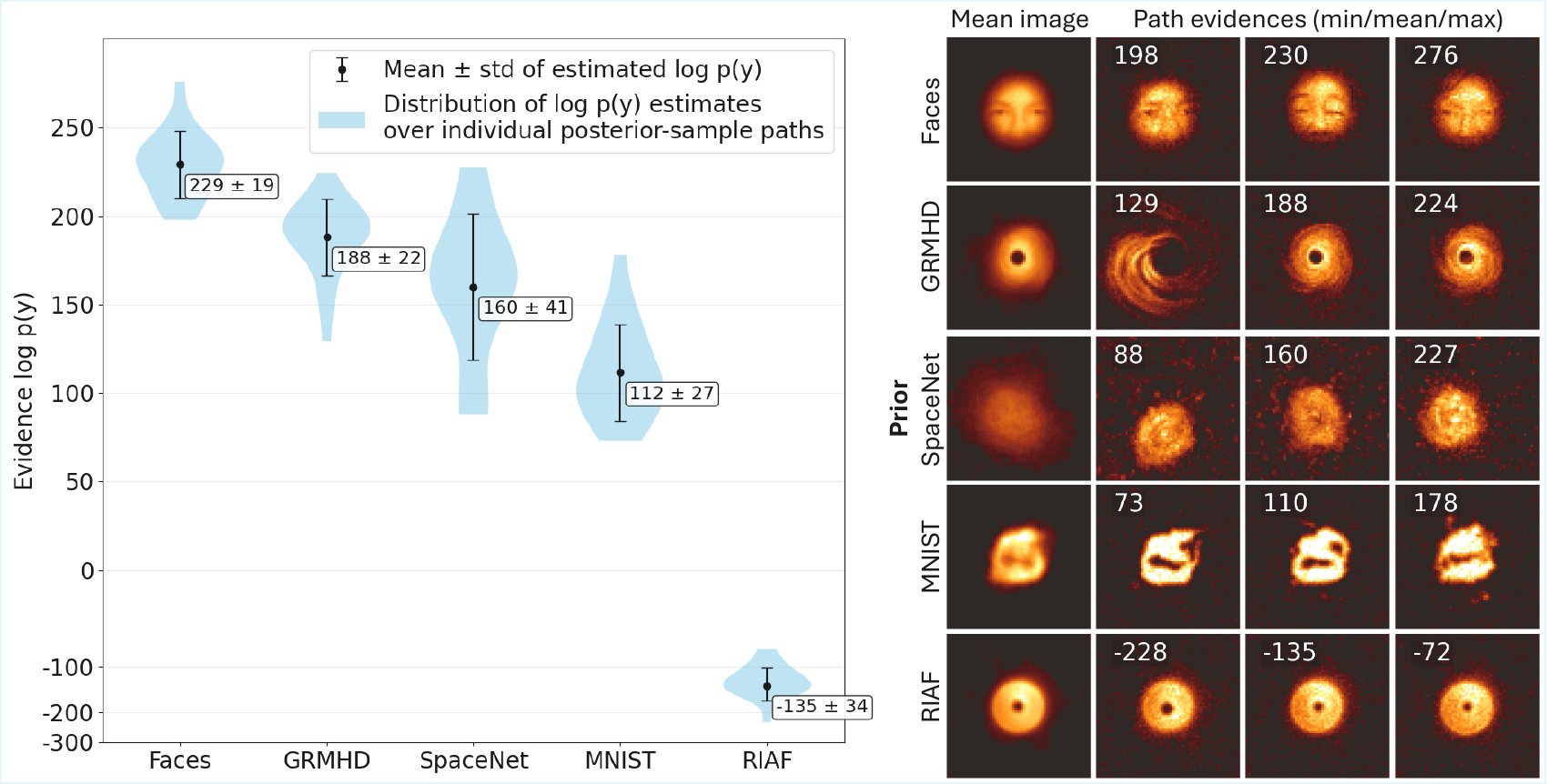}
    \caption{Ground truth: CelebA.}
    \label{fig:syn-celeba}
\end{figure}

\begin{figure}[H]
    \centering
    \includegraphics[width=0.85\linewidth]{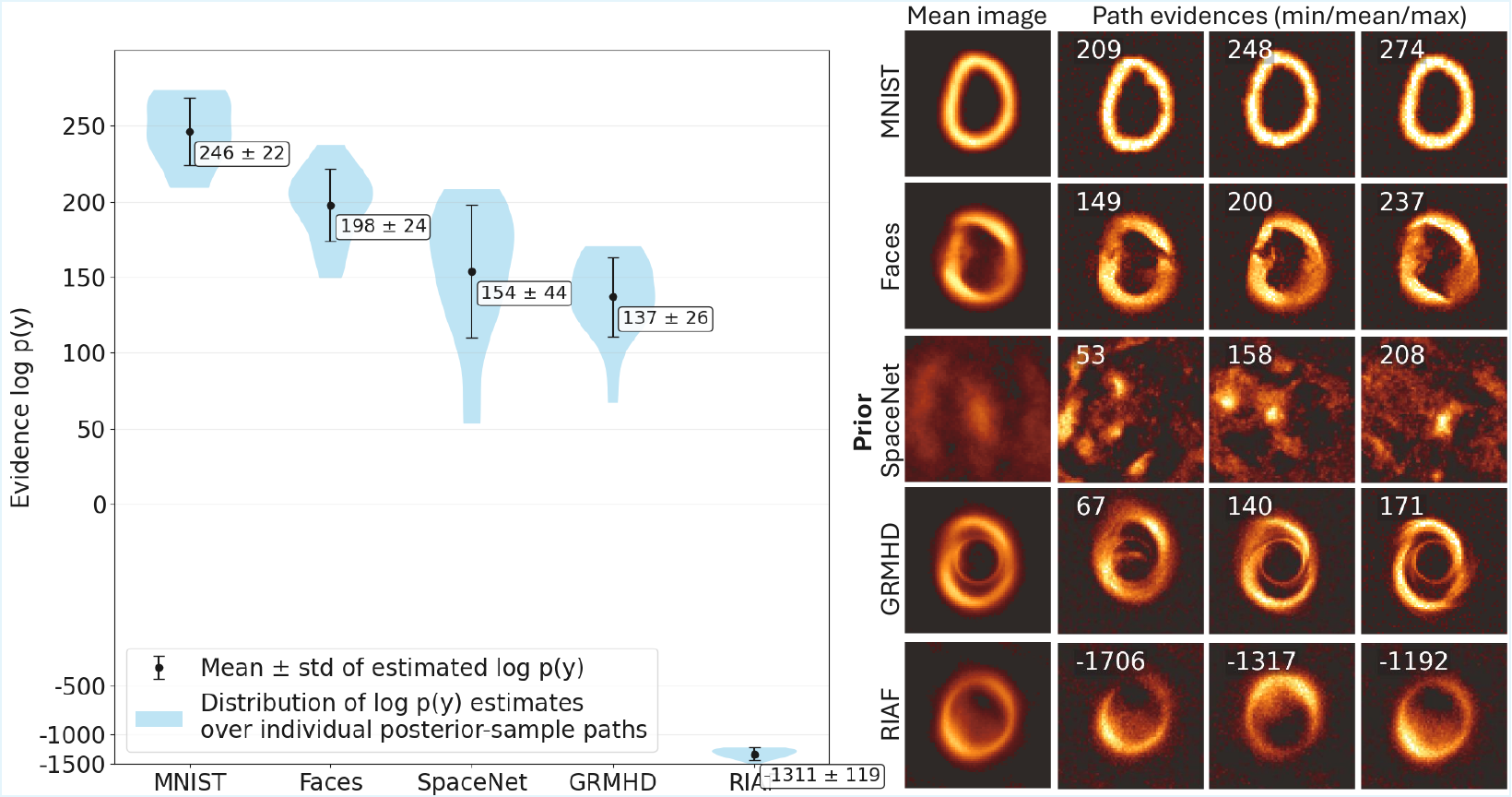}
    \caption{Ground truth: MNIST.}
    \label{fig:syn-MNIST}
\end{figure}

\begin{figure}[H]
    \centering
    \includegraphics[width=0.85\linewidth]{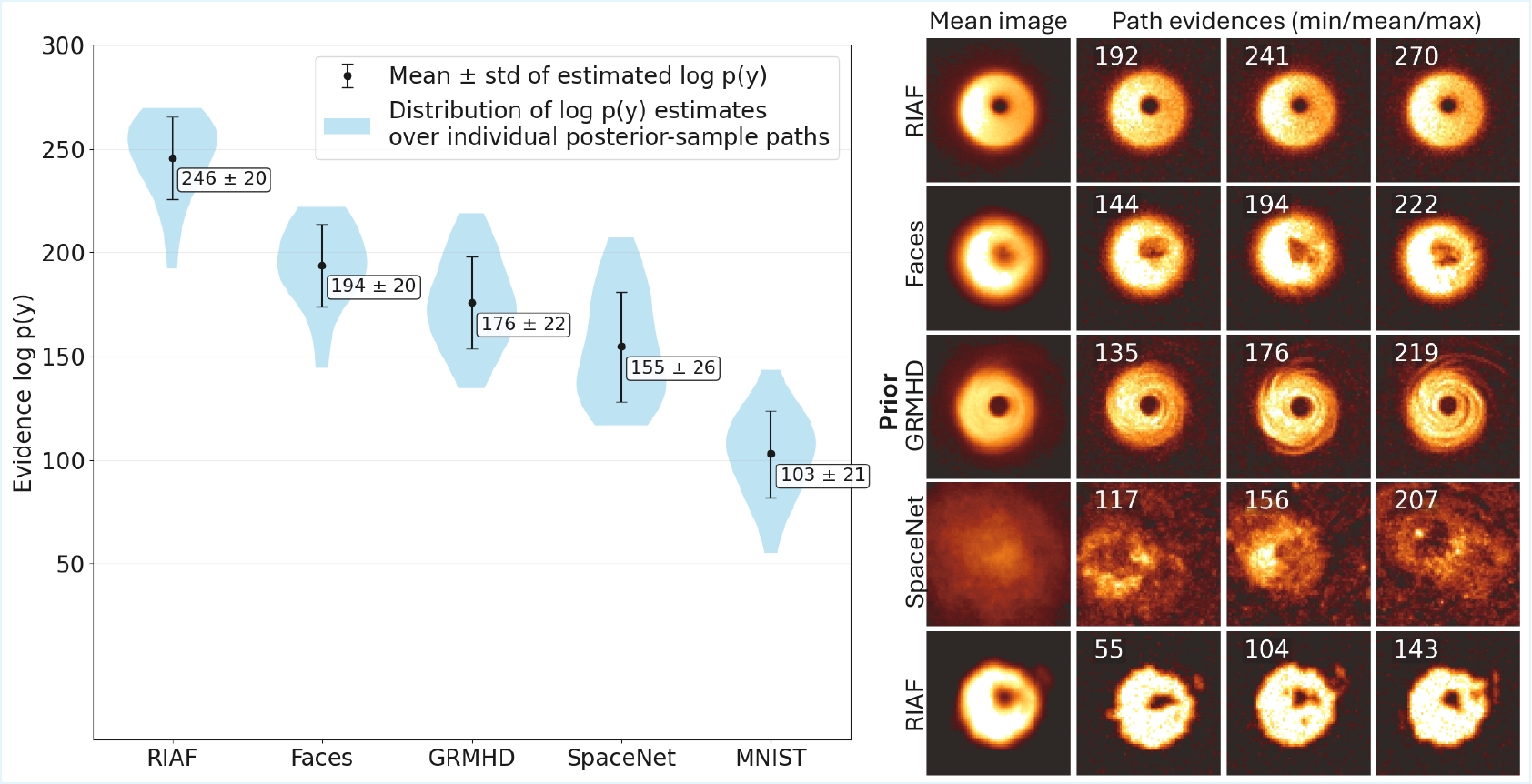}
    \caption{Ground truth: RIAF.}
    \label{fig:syn-RIAF}
\end{figure}

\section{Large Language Model Usage}
We used ChatGPT for coding assistance, polishing text and LaTeX, and checking mathematical derivations.

\end{document}